\documentclass{article}

\usepackage[final]{cpal_2026}
\usepackage{multirow}

\usepackage{url}
\usepackage{hyperref}
\usepackage{amsmath, amssymb, amsthm}
\usepackage{graphicx}
\usepackage{subcaption}
\usepackage{booktabs}

\providecommand{\bm}{\boldsymbol}

\providecommand{\mc}{\mathcal}

\providecommand{\W}{\bm{W}}
\providecommand{\U}{\bm{U}}
\providecommand{\Y}{\bm{Y}}
\providecommand{\X}{\bm{X}}
\providecommand{\Z}{\bm{Z}}
\providecommand{\x}{\bm{x}}
\providecommand{\y}{\bm{y}}
\providecommand{\Real}{\mathbb{R}}
\providecommand{\bfI}{\mathbf{I}}
\providecommand{\Tr}{\mathrm{Tr}}

\theoremstyle{plain}
\newtheorem{theorem}{Theorem}

\newtheorem{proposition}[theorem]{Proposition}

\title{Deep Neural Regression Collapse}

\author{%
  Akshay Rangamani\textsuperscript{1}, Altay Unal\textsuperscript{1}\\
  \textsuperscript{1} Department of Data Science, New Jersey Institute of Technology\\
  \texttt{\{akshay.rangamani, au252\}@njit.edu}
}

\begin{document}

\maketitle

\begin{abstract}
Neural Collapse is a phenomenon that helps identify sparse and low rank structures in deep classifiers. 
Recent work has extended the definition of neural collapse to regression problems, albeit only measuring 
the phenomenon at the last layer. In this paper, we establish that Neural Regression Collapse (NRC) also occurs 
below the last layer across different types of models. We show that in the collapsed layers of neural regression 
models, features lie in a subspace that corresponds to the target dimension, the feature covariance aligns with 
the target covariance, the input subspace of the layer weights aligns with the feature subspace, and the linear prediction 
error of the features is close to the overall prediction error of the model. In addition to establishing Deep NRC\footnote{Deep NRC codebase is available at: \url{https://github.com/altayunal/neural-collapse-regression}}, we also 
show that models that exhibit Deep NRC learn the intrinsic dimension of low rank targets and explore the necessity of 
weight decay in inducing Deep NRC.
This paper provides a more complete picture of the simple structure learned by deep networks in the context of regression.
\end{abstract}

\section{Introduction}
\label{sec:intro}

\emph{Neural Collapse} is a phenomenon that was recently identified in supervised deep classification problems that characterizes the representations as well as the weights learned at the last layer of deep classifiers. Neural Collapse (NC) is characterized by four distinct but related conditions involving the class means and covariance of the last layer features, and the weights of the last layer. These conditions were first proposed and measured at the last layer \cite{papyan2020prevalence}, and later modified and extended to deeper layers \cite{Rangamani2023FeatureLI} to provide a more complete understanding of deep classifiers. The extension of neural collapse to deep neural collapse thus helps identify an important implicit bias of deep network training to networks of minimum depth. This low rank bias in the top layers of deep classifiers allows us to derive tighter generalization bounds for networks exhibiting deep neural collapse \cite{pmlr-v272-pinto25a, ledent2025generalization}. Thus Deep Neural Collapse can help us understand how deep learning finds favorable solutions. Moreover, identifying these sparse and low rank structures in models can potentially aid us in model editing and control. This leads us to the question: \emph{Can we explain the power of deep learning through the emergence of deep neural collapse?} However, before we can claim that the implicit bias towards simple solutions of minimal depth is universal across training deep networks, we need to investigate whether this appears in models beyond supervised classification.  


This question was partially explored in a recent paper \cite{andriopoulos2024the} that proposed three conditions to describe \emph{Neural Regression Collapse} (NRC) at the last layer. The conditions proposed by \citet{andriopoulos2024the} are: last layer features become low rank, the features lie in the row space of the weights, and the outer product of the last layer weights matches the target covariance. The authors of this paper measured these conditions at the output layer of deep regression models, and established that NRC is a prevalent phenomenon. However characterizing this just at the last layer does not immediately allow us to explain how deep networks learn powerful, generalizeable regression models. It still remains to identify whether the conditions of NRC extend below the last layer, and the implicit bias towards simple models of minimal depth also emerges in the case of deep regression. This is the focus of our paper. 

\paragraph{Our Contributions: } 
\begin{enumerate}
    \item In this paper we provide a complete description of Deep Neural Regression Collapse by proposing a set of NRC conditions that can be applied across all layers of a deep regressor. We train a number of models on real datasets and demonstrate that Deep NRC occurs across different types of model architectures.

    \item We show that solutions that exhibit Deep NRC learn the intrinsic dimension of low rank targets. This means that Deep NRC solutions do not just memorize a task and learn generalizable solutions.

    \item We explore how weight decay can control the emergence of Deep NRC and show that it is necessary for finding these solutions. 
\end{enumerate}

\paragraph{Outline:} We briefly discuss work on the phenomenon of neural collapse (section \ref{sec:related_work} before describing how the conditions of NRC can be derived from the NC conditions in section \ref{sec:deepnrc}. Our main results are presented in section \ref{sec:results}, and we conclude in section \ref{sec:conclusion}.

\section{Related Work}
\label{sec:related_work}
Neural collapse is a recent phenomenon that was first identified by \cite{papyan2020prevalence}. During the terminal phase of training (TPT) where the training error becomes zero, it is observed that the class mean vectors and last layer features converge to each other while class mean vectors form a simple equiangular tight frame (ETF) structure. Since the identification of this phenomenon, the different aspects of neural collapse have been investigated. 

Although neural collapse was initially identified under the cross entropy (CE) loss, several studies \cite{han2022neuralcollapsemseloss, poggio2020complexity} discovered the neural collapse under the mean squared error (MSE) loss. Meanwhile, other types of losses have also been shown to exhibit neural collapse \cite{Ergen2020RevealingTS,NEURIPS2022_cdce17de}, other than MSE loss and CE loss. Neural collapse is also investigated in the different techniques used in model training such as weight decay and batch normalization \cite{jacot2025wide, pan2023towards, rangamani2022neural}. In addition, some studies \cite{doi:10.1073/pnas.2221704120, Rangamani2023FeatureLI} showed that the neural collapse exceeds the last layer and is also observed within the intermediate layers.

After the observation of neural collapse under different settings, neural collapse has started to be investigated in other networks and tasks as well. \citet{kothapalli2024neural} studied the neural collapse within graph neural networks (GNNs) while \citet{wu2024linguistic} defined neural collapse properties for large language models (LLMs). As for other tasks, neural collapse has been investigated in imbalanced data classification \cite{fang2021exploring, yan2024neural}, robustness \cite{su2023robustness} and regression \cite{andriopoulos2024the}. In addition to observing neural collapse in different tasks, some studies leverage neural collapse to improve performance on several tasks such as imbalanced data classification \cite{yang2022inducing} and continual learning \cite{dang2024memory, montmaur2024neural}. 

Theoretical explanations have also been explored for neural collapse, primarily through the Unconstrained Features Model \cite{mixon2022neural}, though alternate approaches also exist \cite{xu2023dynamics}. Prior work has shown that neural collapse at the last layer is the optimal solution for the unconstrained features model under both the cross entropy and MSE loss \cite{zhu2021geometric, zhou2022optimization}. This landscape argument has also been extended to modern architectures like ResNets and Transformers \cite{sukenik2025neural}, though a characterization of how collapse emerges in different types of layers is still missing. Theoretical characterizations of Deep Neural Collapse are fewer in number. The most popular approach has been to extend the UFM to many layers, and showing that Deep Neural Collapse is the optimal solution to the Deep UFM \cite{sukenik2023deep} in the case of binary classification. Prior work \cite{shi2025spring} has also attempted to explain the ``law of data separation'' \cite{doi:10.1073/pnas.2221704120} through a phenomelogical lens, but only focuses on linear predictivity of the targets from the representations. While this can model the phenomenon of data separation and provide some understanding of the training hyperparameters, it does not explain the low rank nature of the representations and the weights that is crucial to explaining better generalization \cite{ledent2025generalization, pmlr-v272-pinto25a}.

In this paper we will primarily study regression using the MSE (mean squared error) loss. The MSE loss has also been used in the context of classification \cite{rifkin2002everything, hui2021evaluation} with comparable performance to the cross-entropy loss. It has also been studied specifically in the context of neural collapse \cite{han2022neuralcollapsemseloss, zhou2022optimization, zhou2022all} and low rank matrix factorization \cite{yaras2023invariant}. While there are definitely similarities between regression and classification using the MSE loss \cite{JMLR:v22:20-603}, we can expect that regression and classification models learn different solution geometries. Deep neural collapse in classifiers ensures that features in collapsed layers form a simplex Equiangular Tight Frame, and lie in a $C-1$ dimensional subspace for a $C-$class classification problem. In contrast, collapsed layer features in regression models preserve the continuous covariance structure of the target $\bm{Y}$. Thus the spectral properties and the metrics necessary to capture them will be different between the two settings.

The paper that is most related to our investigation is that of \citet{andriopoulos2024the}, which defines three conditions of Neural Regression Collapse and characterizes the emergence of NRC at the last layer. In this paper we will explain where these conditions emerge from and establish that they occur beyond the last layer. From a theoretical perspective as well, it is not clear whether deep neural collapse is the optimal solution to the Deep UFM for multi-class classification \cite{sukenik2024neural} or regression with multiple targets. Our findings in this paper make it clear that deep neural collapse occurs in regression settings as well.

\section{Conditions for Neural Regression Collapse} 
\label{sec:deepnrc}

\paragraph{Neural Collapse in Classification:} Consider a deep classifier $f_{\W} (\x)$ with $L$ layers that map inputs $\x \in \Real^d$ to a vector of $C$ class scores. Let $\bm{H}^\ell \in \Real^{p \times NC}$ denote the activation matrix, $\bm{M}^\ell = [\bm{\mu}^{\ell}_c - \bm{\mu}^{\ell}_G] \in \Real^{p \times C}$ denote the matrix of class means, and $\Sigma_W^{\ell}, \Sigma_B^{\ell}, \Sigma_T^{\ell}$ denote the within-class, between-class and total covariance matrices of the activations at layer $\ell$. Deep neural collapse is characterized by the four conditions of variability collapse (NC1), emergence of simplex equiangular tight frames (ETFs) in the mean features (NC2), feature-weight alignment (NC3), and equivalence to nearest class center classification (NC4). 

\paragraph{Extensions to regression:} The core insight of neural collapse follows from the decomposition of the total covariance into the within and between class covariances $\Sigma_T^\ell = \Sigma_W^\ell + \Sigma_B^\ell$. This is a signal-noise decomposition where the between class covariance $\Sigma_B^\ell$ is the signal, and the within class covariance $\Sigma_W^\ell$ is the noise. Also, for a balanced classification problem, we have the target covariance $\Sigma_{\Y} = \bfI_C - \frac{1}{C} \bm{1}_C \bm{1}_C^\top$. Highlighting these facts allows us to reinterpret the four NC conditions in terms of how layers of a deep network extract the target signal and suppress the noise. 

Consider a regression problem with inputs and targets $\{(\x_i, \y_i)\}_{i=1}^N \subseteq \Real^d \times \Real^t$, being solved with a deep network $f_{\bm{W}}$ with depth $L$ and width $h$. For deep neural collapse to occur in this scenario, we would expect that the top layers of the network extract the target signal and minimize the amount of noise in their weights and representations. This leads us to the following conditions for neural regression collapse:

\paragraph{(NRC1) Noise Suppression: } For the features $\bm{H}^\ell$ at layer $\ell$, let $\U^\ell \in \Real^{h \times t}$ denote the top $t$ singular vectors of the feature covariance $\Sigma_{\bm{H}^\ell}$. The magnitude of the noise component of the covariance can be computed as $\Tr \left(\left( \bfI_h - \U^\ell \U^{\ell \top}  \right) \Sigma_{\bm{H}^\ell}\right)$. If we compute this as a fraction of the total feature covariance, we get the noise component as $1 - (\Tr(\U^{\ell \top} \bm{\Sigma}_{\bm{H}^\ell} \U^\ell) / \Tr(\bm{\Sigma}_{\bm{H}^\ell}))$. We can thus say a layer of a regression model is collapsed and obeys NRC1 if the noise component is $\ll 1$.

\paragraph{(NRC2) Signal$-$Target Alignment:} We can define NRC2 as the condition under which the signal component of the layer features $\U^\ell \U^{\ell \top} \bm{H}^\ell$ is aligned with the target $\Y$. However, this alignment can only happen up to a certain scaling and rotation factor, so we use the Centered Kernel Alignment (CKA) \cite{lu2014multiple, kornblith2019similarity} between the features and the target as our NRC2 criterion. A collapsed layer will have $\textrm{CKA} (\bm{H}^\ell, \Y) \approx 1$. 

\paragraph{(NRC3) Feature-Weight Alignment: } We find the alignment between the signal components of layer features and the input subspace of the weights. More precisely, we compute the mean cosines of the principal angles between subspaces (PABS) between the signal subspace $\U^\ell$ and the top $t$-dimensional input subspace of $\W^\ell$. A layer is said to be collapsed if $\frac{1}{t} \sum_{k=1}^t \textrm{cos}(\theta_k) \rightarrow 1$. 


\paragraph{(NRC4) Linear Predictability: } In classification, NC4 expresses the idea that all information required to perform classification is present in the features of collapsed layers. Generalizing this idea, we expect that in the collapsed layers of regression models, one can predict the target from the layer features through just a linear transformation. We can say that a collapsed layer shows NRC4 if the mean squared error (MSE) of the pseudo inverse solution $\frac{1}{N} \| \bm{H}^\ell (\bm{H}^{\ell \dagger} \Y) - \Y \|_F^2$ is not much larger than the MSE of the entire trained network.

We summarize our observations in this section in Table \ref{tab:nc_conditions}.

\begin{table}[h]
\centering
\caption{A unified picture of Neural Collapse in classification and regression}
\label{tab:nc_conditions}
\vspace{0.2cm}

\begin{tabular}{p{0.2\textwidth}|p{0.3\textwidth}|p{0.3\textwidth}}
\hline
\textbf{Collapse Condition} & \textbf{Classification (NC) \cite{papyan2020prevalence, Rangamani2023FeatureLI}} & \textbf{Regression (NRC)} \cite{andriopoulos2024the}, This paper \\
\hline
\textbf{NC1/NRC1:} Noise Suppression & 
$ \Tr (\Sigma_W^\ell) / \Tr(\Sigma_T^\ell) \rightarrow 0$ & $1 - \frac{\Tr(\U^{\ell \top} \bm{\Sigma}_{\bm{H}^\ell} \U^\ell)}{\Tr(\bm{\Sigma}_{\bm{H}^\ell})} \rightarrow 0$ \\
\hline
\textbf{NC2/NRC2:} Signal-Target Alignment & 
$\bm{M}^\ell \bm{M}^{\ell \top} \propto \bm{I}_C - \frac{1}{C} \bm{1}_C \bm{1}_C^\top$ &
$\textrm{CKA}(\bm{H}^\ell, \Y) \approx 1$ \\
\hline
\textbf{NC3/NRC3:} Feature-Weight Alignment & 
$\tfrac{1}{C} \sum \cos \angle (\bm{M}^\ell,  \W^{\ell \top}) \rightarrow 1$ &
$\tfrac{1}{t} \sum \cos \angle (\U^\ell ,  \W^{\ell \top}) \rightarrow 1$ \\
\hline
\textbf{NC4/NRC4:} Linear Predictability & 
Acc (NCC $(\bm{H}^\ell)$) $\approx$ Acc ($f_{\W}$) &
$\frac{1}{N} \| \bm{H}^\ell (\bm{H}^{\ell \dagger} \Y) - \Y \|_F^2 \approx \frac{1}{N} \| f_{\W} (\X) - \Y \|_F^2$ \\
\hline
\end{tabular}
\end{table}

\section{Experiments \& Results}
\label{sec:results}

In this section, we demonstrate that Deep Neural Regression Collapse emerges in well trained models across model architectures and datasets. We will present results on synthetically generated and real datasets using multilayer perceptrons (MLPs) as well as convolutional networks (CNNs). We also demonstrate that deep regressors that exhibit collapse can learn the intrinsic dimension of low rank targets, which means we can expect the collapsed solutions to generalize as well. Finally, we investigate the role of weight decay in inducing Deep NRC, and show through experiments that weight decay is necessary.

\subsection{Datasets \& Models}
\label{subsec:datasets}

\paragraph{Synthetic Data:} We generated a regression dataset by first drawing $n=10,000$ samples of $d=20$ dimensional input vectors. The targets were generated by passing these vectors through a linear/nonlinear generative model with output dimension $t$. The nonlinear generative models were fully-connected neural networks with 2 hidden layers of dimension $r$ and randomly initialized weights, while the linear generative models were $d \times t$ matrices of rank $r \leq t$. The first $80\%$ of the data was used for training while the remaining $20\%$ was kept aside as test data.

\begin{table}[]
\caption{Overview of datasets and models used in experiments}
\label{tab:deepnrcexp}
\resizebox{\textwidth}{!}{
\begin{tabular}{c|c|c|c|c|c}
Dataset                                                            & Input Dimension           & Target Dimension & Number of Samples & Task                                & Architecture                      \\ \hline
Swimmer                                                            & 8                         & 2                & 1,000             & \multirow{3}{*}{Imitation Learning} & 8-layer MLP, hidden dimension $h=256$ \\
Reacher                                                            & 11                        & 2                & 1,000             &                                     & 8-layer MLP, hidden dimension $h=512$ \\
Hopper                                                             & 11                        & 3                & 5,000             &                                     & 8-layer MLP, hidden dimension $h=256$ \\ \hline
Carla2D \cite{codevilla2018end}                   & $288 \times 200 \times 3$ & 2                & 50,000            & Driving Simulation                  & ResNet-18                         \\ \hline
UTKFace \cite{zhang2017age}                       & $200 \times 200 \times 3$ & 1                & 25,000            & Age Regression                      & ResNet-34 \\ \hline
SGEMM \cite{sgemm_gpu_kernel_performance_440} & 14                        & 4                & 240,000           & Matrix Multiplication               & 8-layer MLP, hidden dimension $h=512$ \\ 
                      
\end{tabular}
}
\end{table}

\paragraph{Real Data:} We used imitation learning datasets based on the MuJoCo physics engine \citep{todorov2012mujoco, brockman2016openai}. We ran experiments on three datasets based on the Swimmer, Reacher, and Hopper environments. The inputs for the imitation learning environments correspond to raw robotic states while the targets correspond to the choice of actions to take in the state. We also used two image datasets - Carla2D\cite{codevilla2018end} and UTKFace\cite{zhang2017age} to test whether Deep NRC occurs in CNNs. Carla2D consists of images from an autonomous driving simulator, and the task is to predict the speed as well as the steering angle. UTKFace is a dataset of face images commonly used for age estimation. Finally we also used the SGEMM \cite{sgemm_gpu_kernel_performance_440} dataset that estimates the runtime of different GPU kernels. The dataset contains 4 different measurements of the runtime, so we used this as an example of a dataset with low rank targets that we investigate in section \ref{subsec:lowrank}. Since the 4 targets are expected to be correlated, the rank of the target should be 1. The dataset dimensions as well as sizes are provided in Table \ref{tab:deepnrcexp}. As in the case of the synthetic data experiments, $80\%$ of the data was used for training while the remaining $20\%$ was kept aside as test data. Our code repository is available from this \href{https://github.com/altayunal/neural-collapse-regression}{link.}

\subsection{Measuring Deep Neural Regression Collapse}\label{subsec:deepnrc_measurements}

Our main results demonstrating the emergence of Deep NRC in CNNs as well as MLPs are presented in Figures \ref{fig:utkface_carla2d} and \ref{fig:swimmer_hopper}, respectively. In each figure, we present from top to bottom the noise component (NRC1), signal-target alignment (NRC2), feature-weight alignment (NRC3), and linear predictability (NRC4). Each column of figures \ref{fig:utkface_carla2d} and \ref{fig:swimmer_hopper} represents the results from one dataset.

In the top row of Figures \ref{fig:utkface_carla2d} and \ref{fig:swimmer_hopper}, we can observe the noise component - which is the fraction of the energy in the feature covariance at each layer ($\Sigma_{\bm{H}^\ell}$) that lies outside its top $t$-dimensional subspace $\U^\ell$. If the noise component is $\approx 1$, the layer features are not low rank, whereas a noise component $\rightarrow 0$ means that the layer features lie in a subspace whose dimension corresponds to the target dimension $t$.
The second row of Figures \ref{fig:utkface_carla2d} and \ref{fig:swimmer_hopper} establishes the feature-target alignment (NRC2) condition. We project the layer features $\bm{H}^\ell$ onto $\U^\ell$ and measure its centered kernel alignment (CKA) with the target $\Y$. Using CKA, we can measure the alignment between feature and target covariances even though they may have differences in scale and rotations.

Next in the third row of Figures \ref{fig:utkface_carla2d} and \ref{fig:swimmer_hopper}, we can see the Feature-Weight alignment (NRC3) condition, which measures the mean cosines of the angles between the top $t$-dimensional input subspace of the weights at a layer $\bm{W}^\ell$ and the top-$t$ dimensional subspace of the previous layer features $\bm{H}^{\ell-1}$. Here we find that as moving towards the output layers of the models, the alignment increases $\left( \frac{1}{t}\sum_{i=1}^t \textrm{cos}\theta^\ell_i \rightarrow 1 \right)$, showing that the top layers are indeed collapsed. In the case of convolutional layers, we reshape the feature and filter tensors to matrices with the first dimension equaling the number of input channels ($c^\ell_\textrm{in}$) and measure the angles between the top-$t$ dimensional left singular vectors. 
Finally in the bottom row of Figures \ref{fig:utkface_carla2d} and \ref{fig:swimmer_hopper}, we measure the Linear Predictability (NRC4) condition. Here we show the mean squared error (MSE) of predicting the target from the features at each layer with the pseudo inverse solution $\bm{H}^{\ell \dagger} \Y$. In each plot, we also display the MSE of the entire model on the training set in a dashed red line for reference. We find the layer prediction MSE to be comparable to the model's MSE on the training set.

Across models and datasets, our results show that the NRC conditions are more likely to be satisfied close to the output layers of deep regressors. We use the NRC1 condition to identify the collapsed layers, and mark the first collapsed layer in each model using a green line. In the layers subsequent to the first collapsed layer, we find that the conditions of neural regression collapse occur together. We have a small noise component, a high feature-target alignment, a high feature-weight alignment, and low linear prediction error. More importantly, these conditions are observed beyond just the output layer.

\begin{figure*}
     \centering
     \begin{subfigure}[b]{0.45\textwidth}
         \centering
         \includegraphics[width=\textwidth]{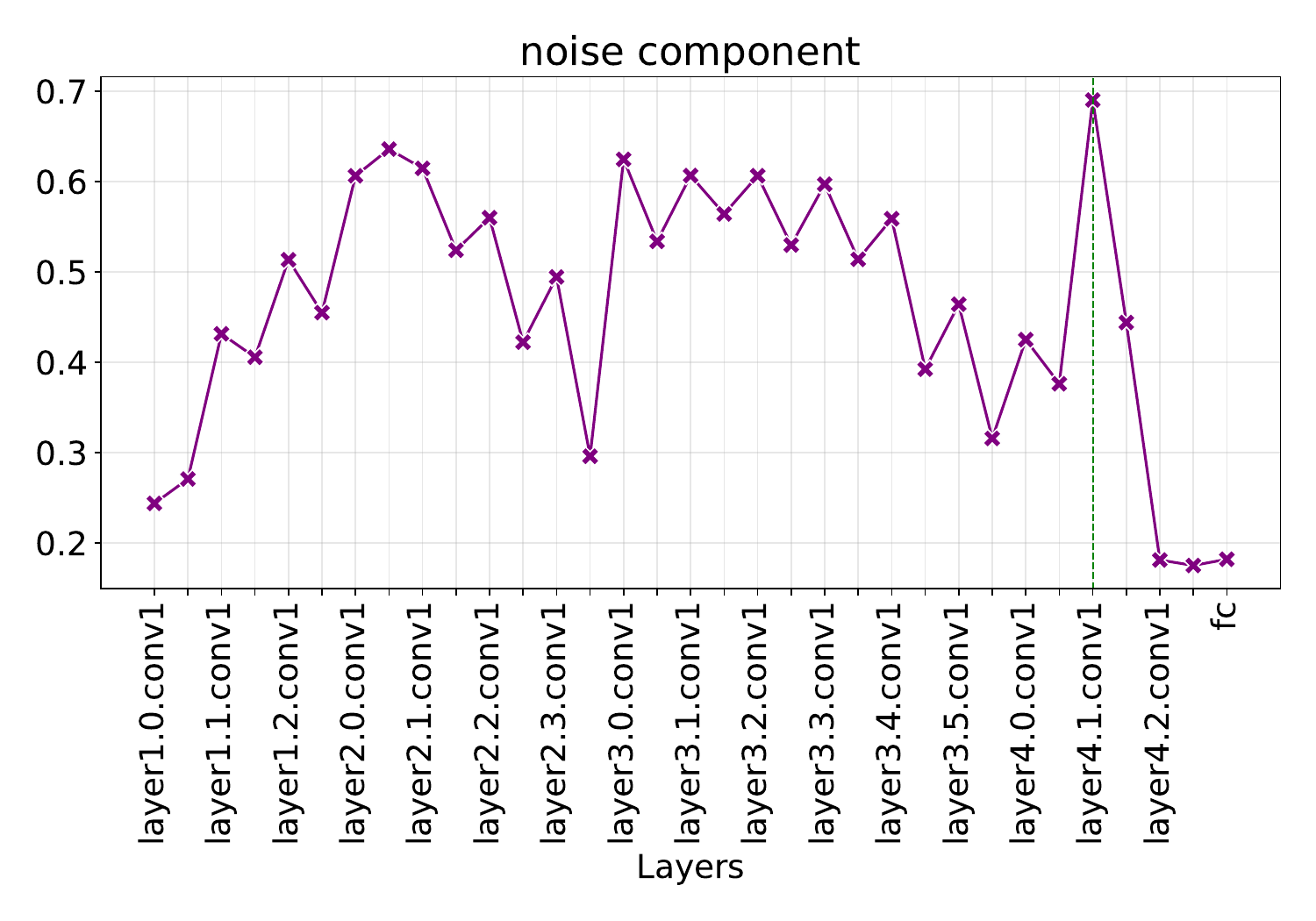}
     \end{subfigure}
     \hfil
     \begin{subfigure}[b]{0.45\textwidth}
         \centering
         \includegraphics[width=\textwidth]{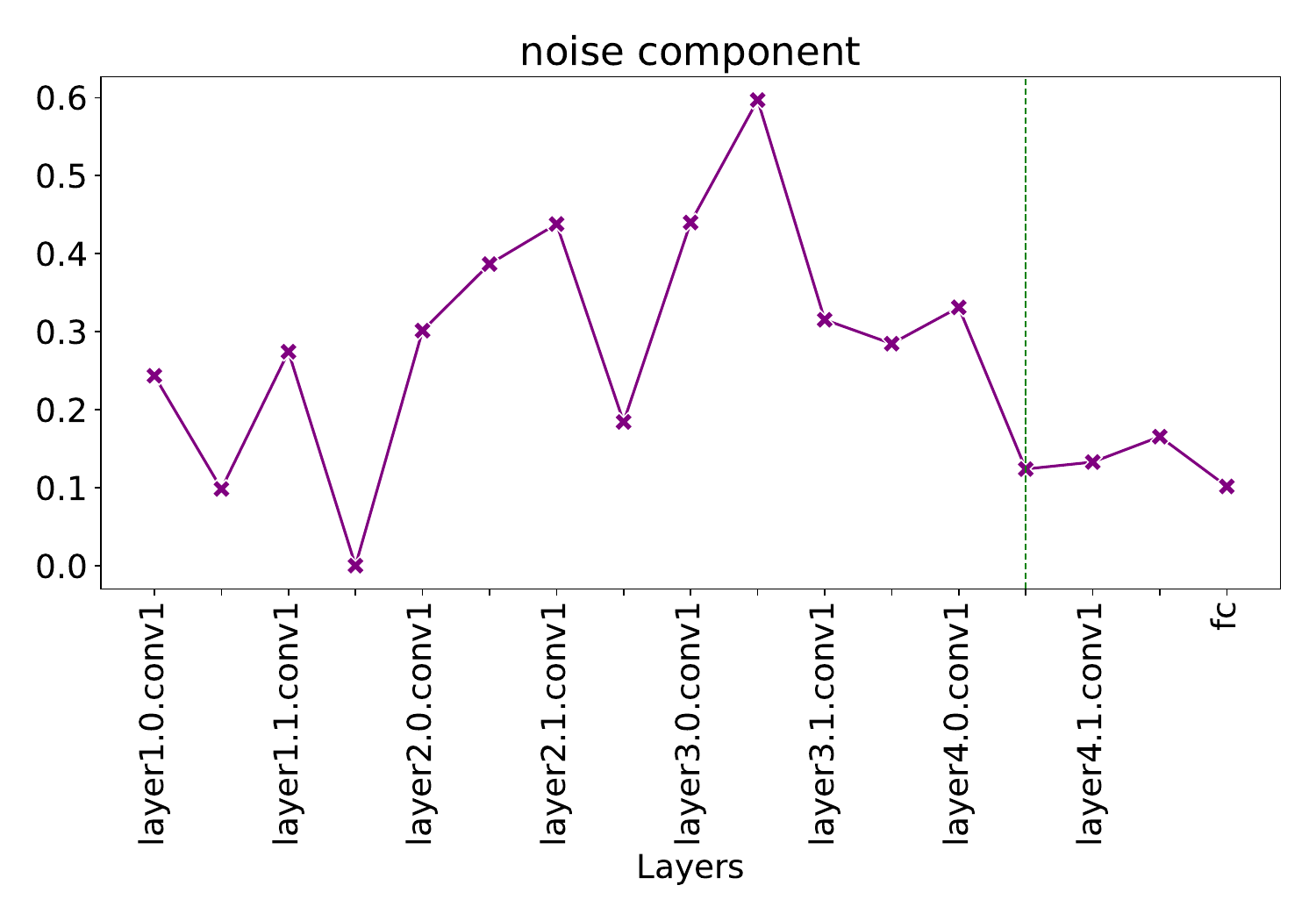}
     \end{subfigure}

    \begin{subfigure}[b]{0.45\textwidth}
        \centering
        \includegraphics[width=\textwidth]{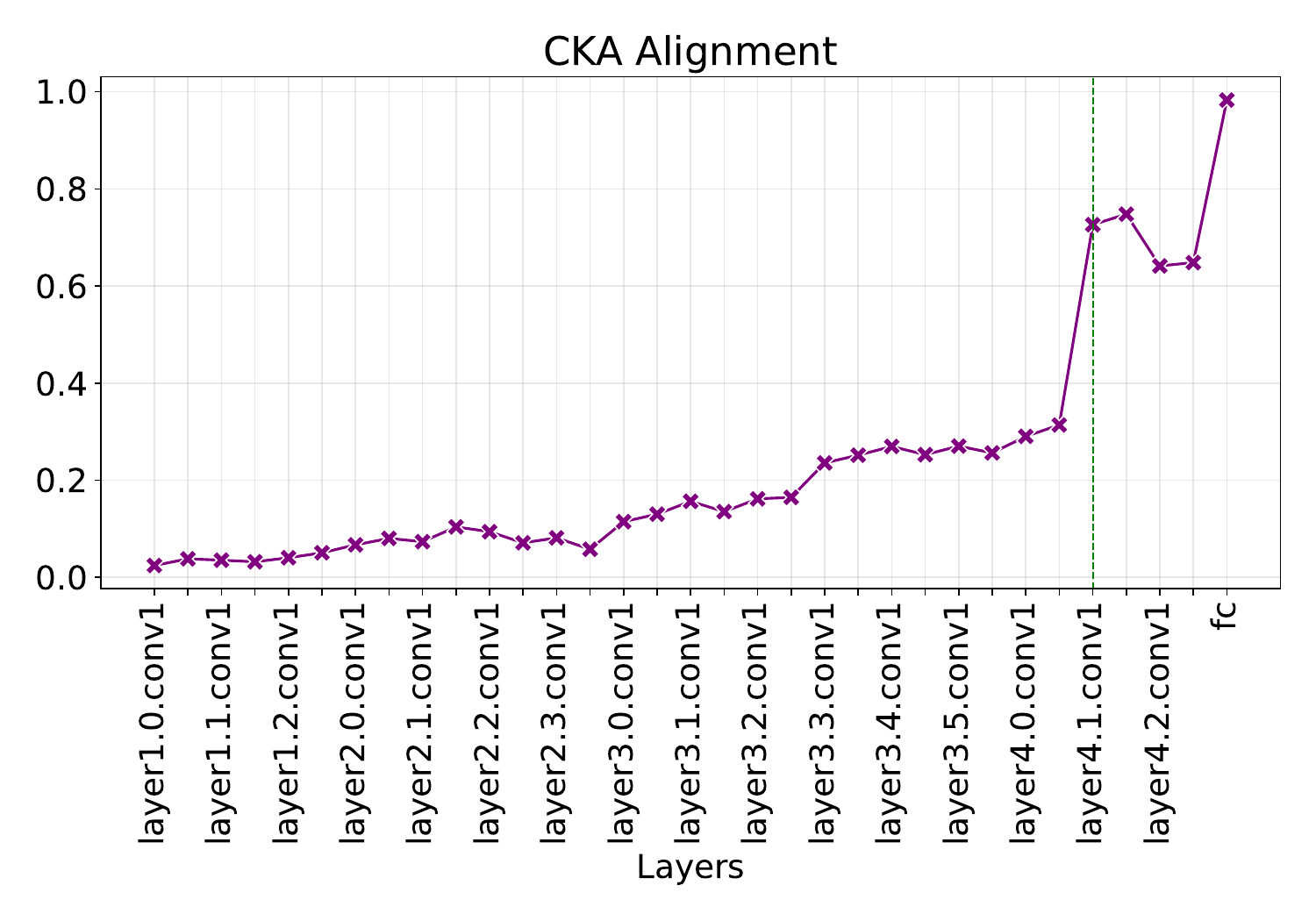}
    \end{subfigure}
    \hfil
     \begin{subfigure}[b]{0.45\textwidth}
        \centering
        \includegraphics[width=\textwidth]{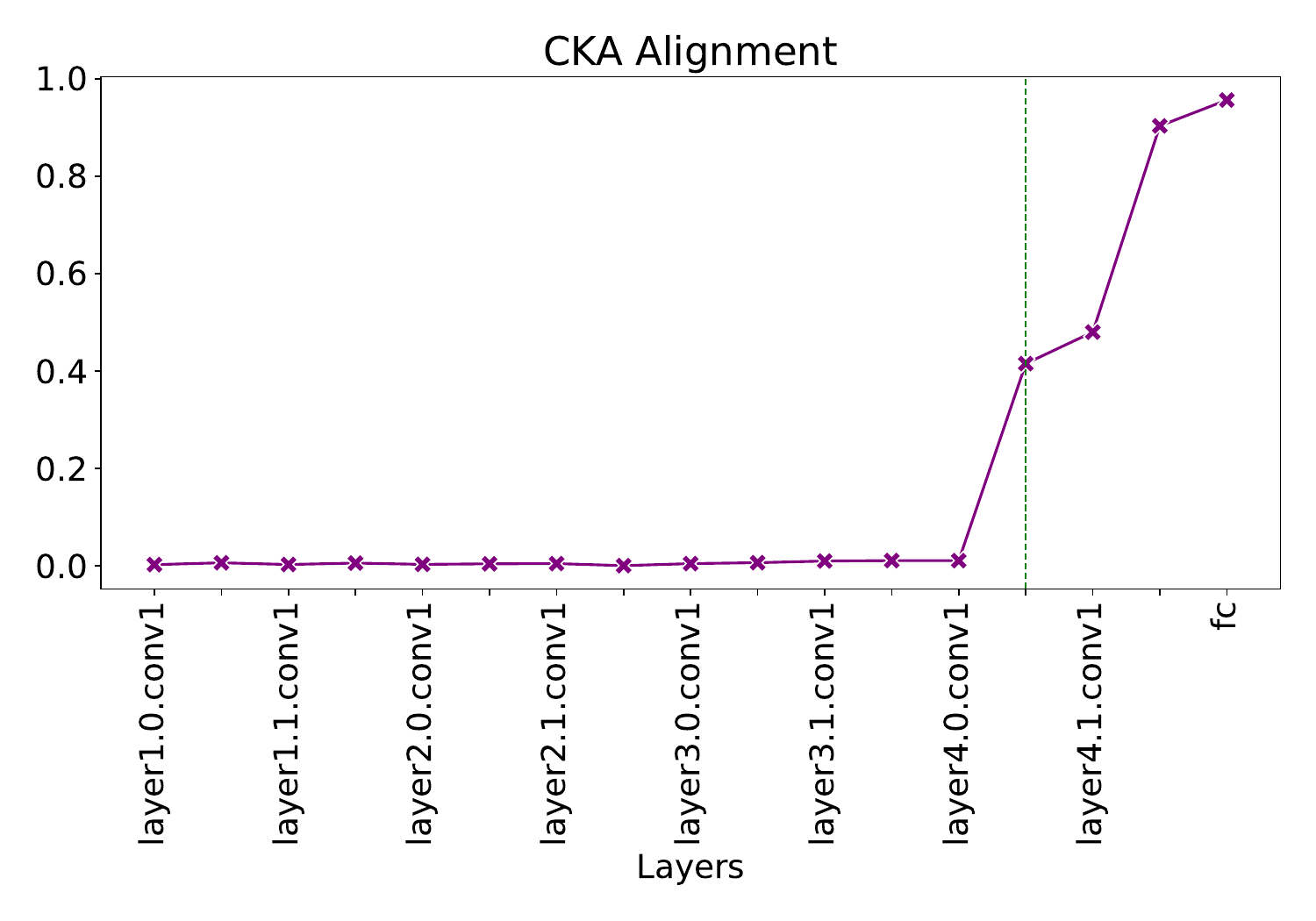}
    \end{subfigure}

    \begin{subfigure}[b]{0.45\textwidth}
         \centering
         \includegraphics[width=\textwidth]{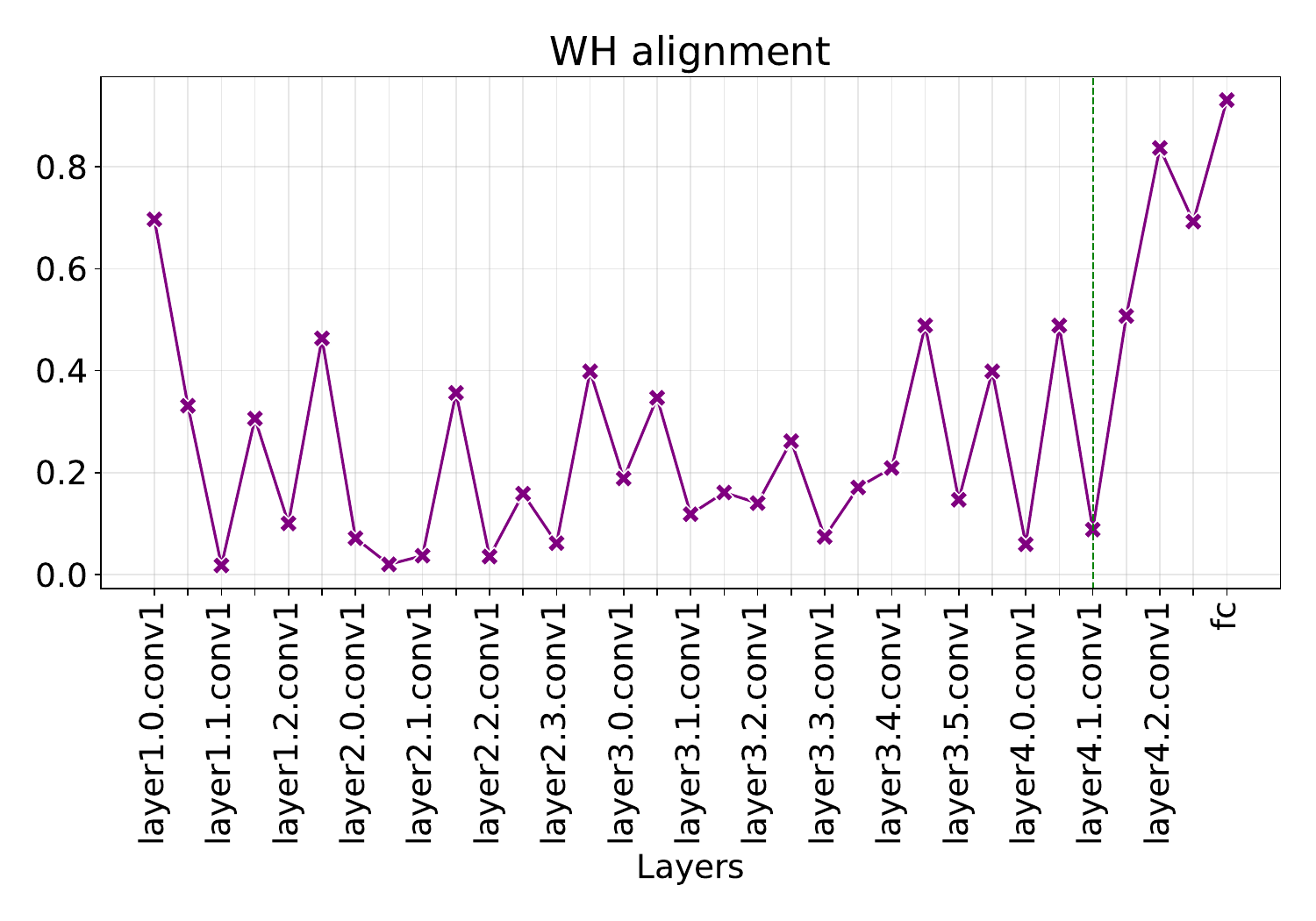}
     \end{subfigure}
     \hfil
    \begin{subfigure}[b]{0.45\textwidth}
         \centering
         \includegraphics[width=\textwidth]{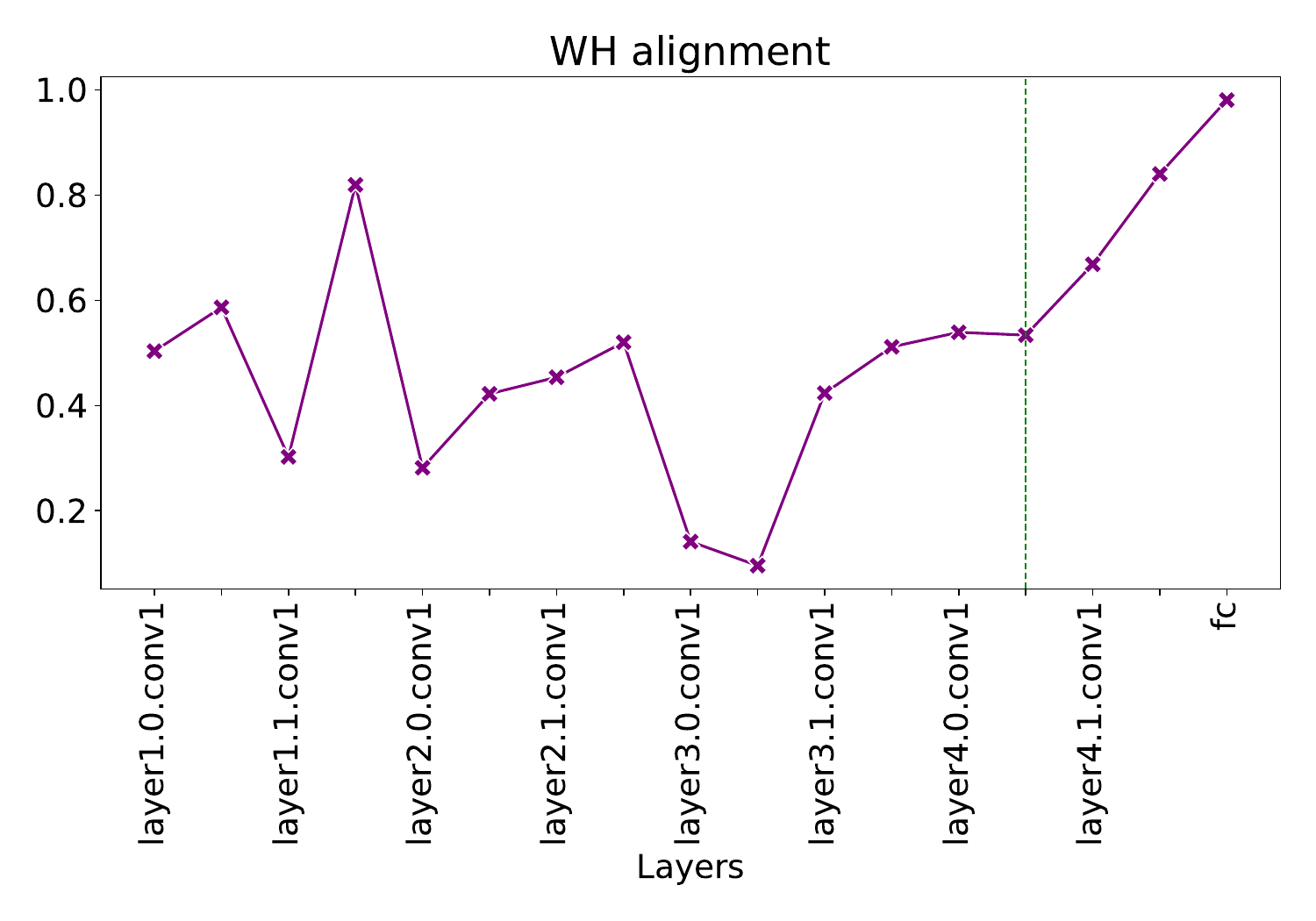}
     \end{subfigure}

     \begin{subfigure}[b]{0.45\textwidth}
         \centering
         \includegraphics[width=\textwidth]{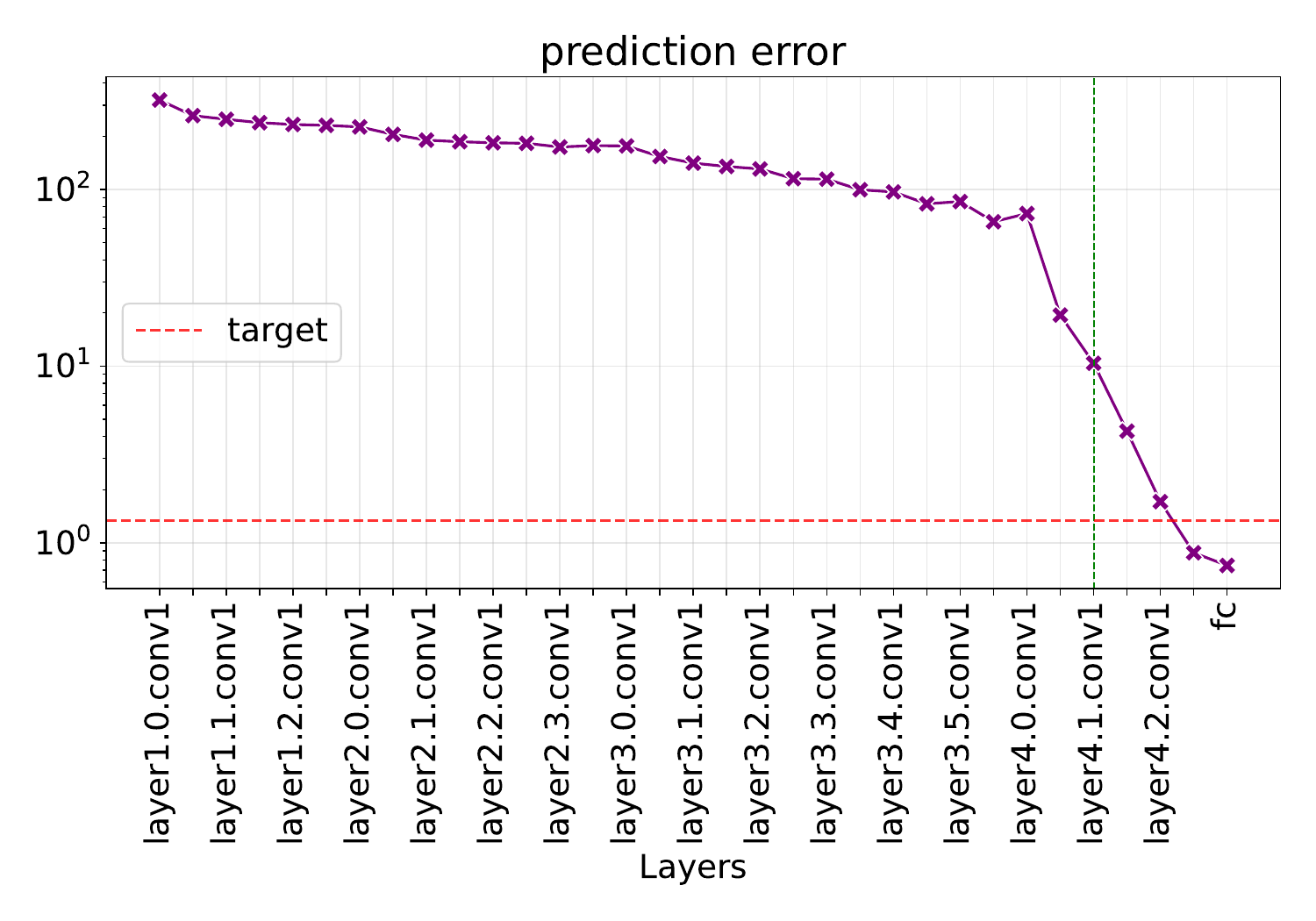}
     \end{subfigure}
     \hfil
     \begin{subfigure}[b]{0.45\textwidth}
         \centering
         \includegraphics[width=\textwidth]{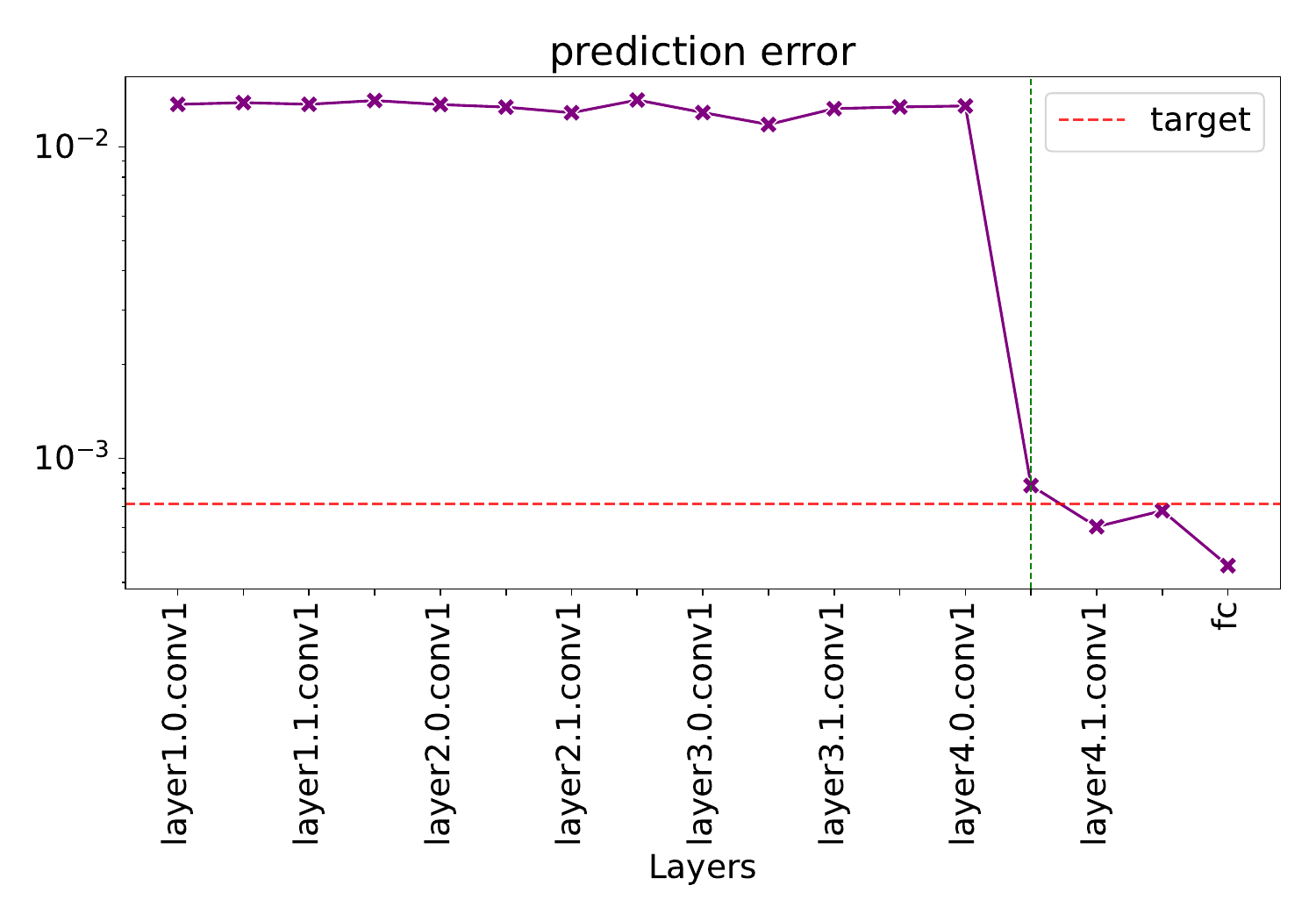}
     \end{subfigure}
     
        \caption{\textbf{Deep NRC in ResNets:} NRC measurements from a ResNet34 trained on the age-regression task in UTKFace (left column) and a ResNet18 trained on Carla2D (right column). The vertical green line in all plots indicates the first collapsed layer. First row (NRC1) shows the noise component being a small fraction of the energy in the collapsed layer representations. The second row shows the CKA between layer features and the target (NRC2). The third row (NRC3) shows the alignment between the features and the weights in the collapsed layers and the final row (NRC4) shows the MSE of linearly predicting the targets from the features in each layer.
        }
        \label{fig:utkface_carla2d}
\end{figure*}

\begin{figure*}
     \centering
     \begin{subfigure}[b]{0.3\textwidth}
         \centering
         \includegraphics[width=\textwidth]{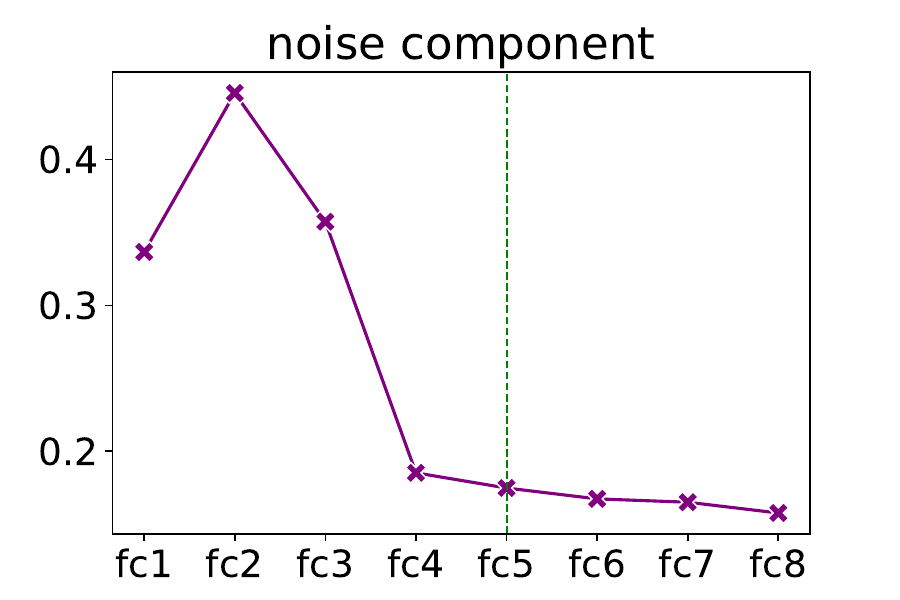}
     \end{subfigure}
    \hfil
     \begin{subfigure}[b]{0.3\textwidth}
         \centering
         \includegraphics[width=\textwidth]{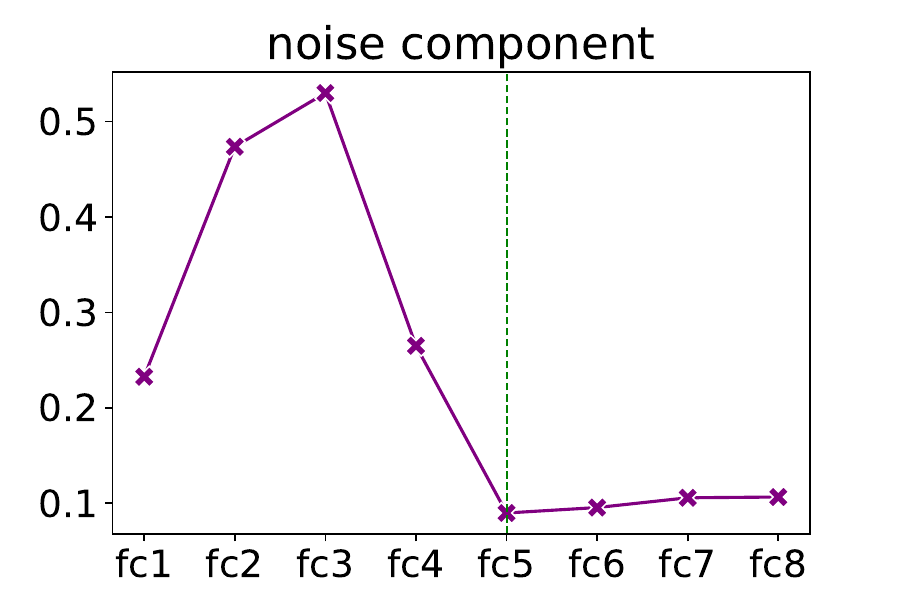}
     \end{subfigure}

    \begin{subfigure}[b]{0.3\textwidth}
        \centering
        \includegraphics[width=\textwidth]{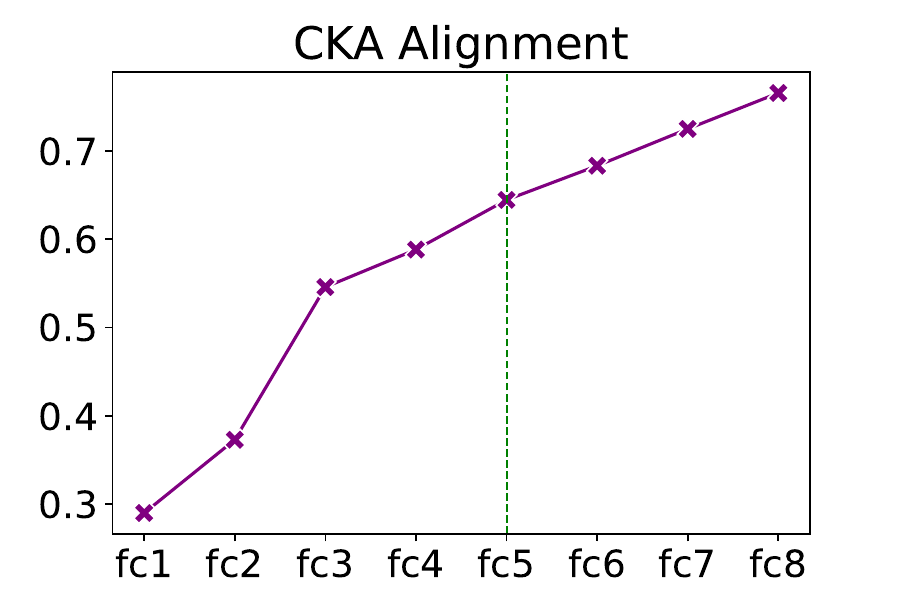}
    \end{subfigure}
    \hfil
    \begin{subfigure}[b]{0.3\textwidth}
        \centering
        \includegraphics[width=\textwidth]{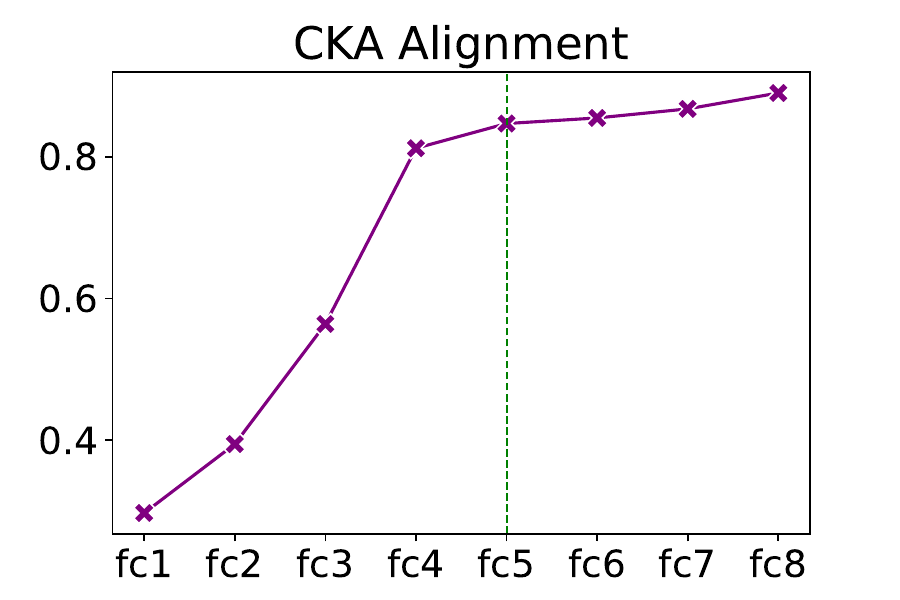}
    \end{subfigure}

    \begin{subfigure}[b]{0.3\textwidth}
         \centering
         \includegraphics[width=\textwidth]{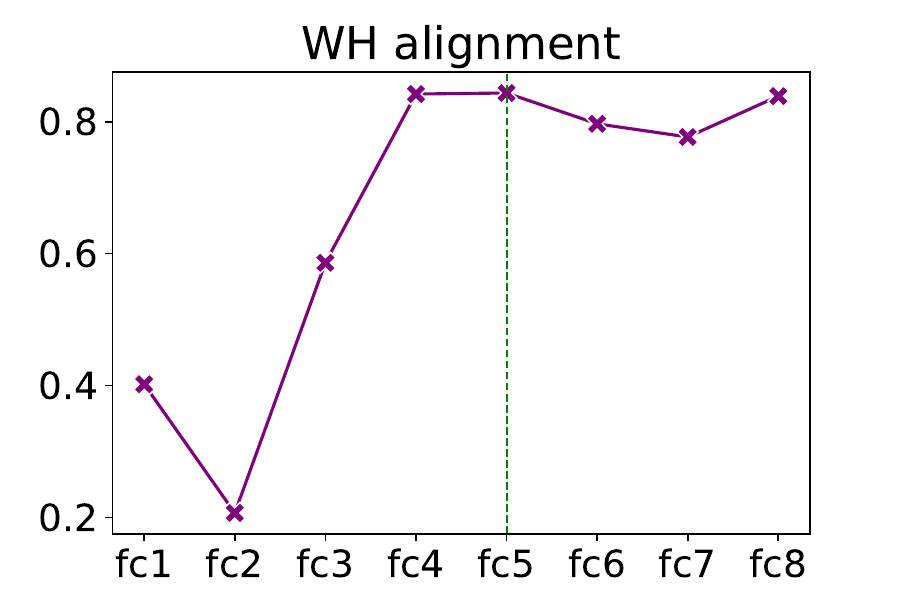}
     \end{subfigure}
     \hfil
    \begin{subfigure}[b]{0.3\textwidth}
         \centering
         \includegraphics[width=\textwidth]{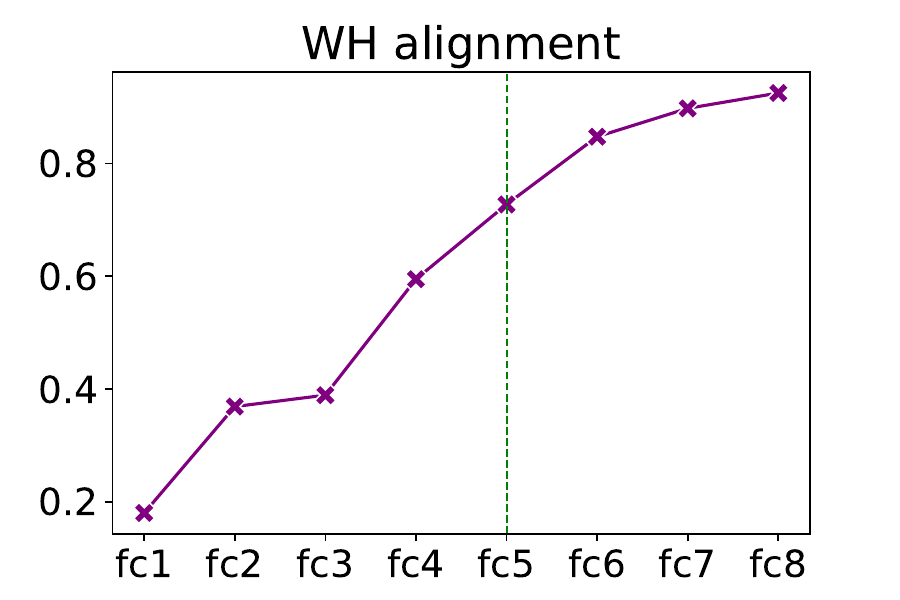}
     \end{subfigure}

     \begin{subfigure}[b]{0.3\textwidth}
         \centering
         \includegraphics[width=\textwidth]{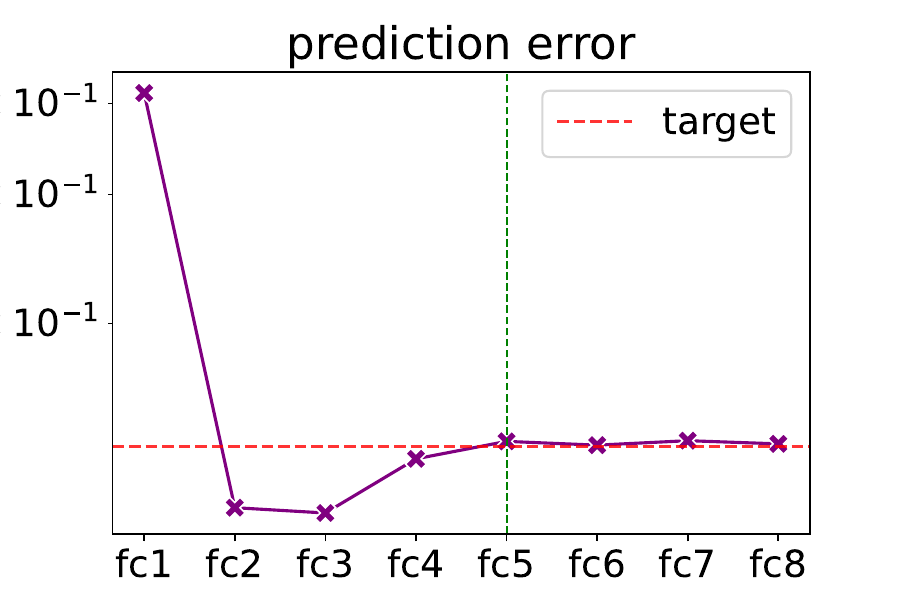}
     \end{subfigure}
     \hfil
     \begin{subfigure}[b]{0.3\textwidth}
         \centering
         \includegraphics[width=\textwidth]{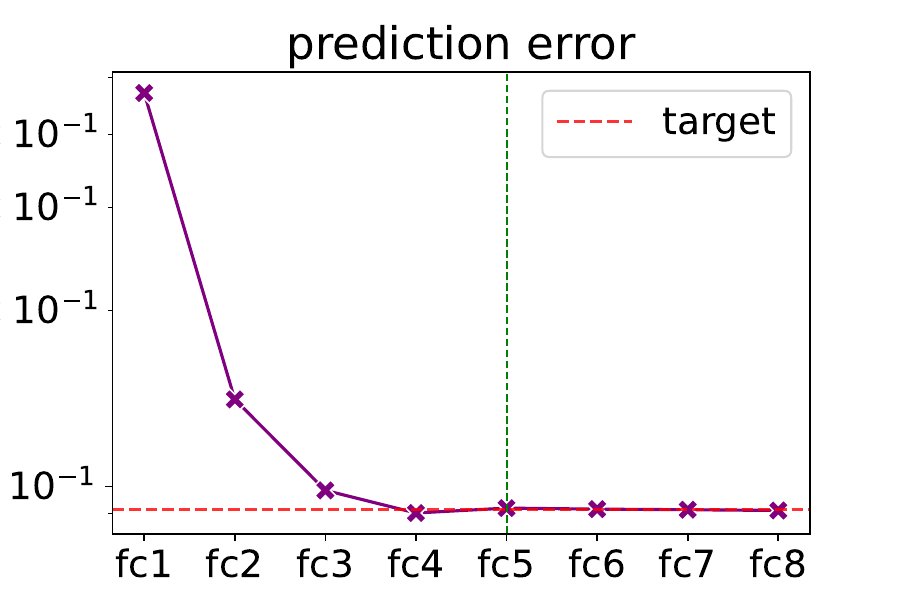}
     \end{subfigure}

        \caption{\textbf{Deep NRC in MLPs:} NRC measurements from $8$-layer MLPs trained on imitation learning tasks in the MuJoCo Swimmer (left column) and Hopper (right column) environments. The vertical green line in all plots indicates the first collapsed layer. Top row (NRC1) shows the noise component being a small fraction of the energy in the collapsed layer representations. The second row shows the CKA between layer features and the target (NRC2). The third row (NRC3) shows the alignment between the features and the weights in the collapsed layers and the bottom row (NRC4) shows the MSE of linearly predicting the targets from the features in each layer.
        }
        \label{fig:swimmer_hopper}
\end{figure*}

\subsection{Deep NRC Learns Low Rank Targets}\label{subsec:lowrank}

In the previous subsection, we observed how deep networks trained on regression problems learn features that correspond to the target subspace. However, it is unclear whether the solutions that display collapse truly learn generalizable solutions. To test this, we study whether deep NRC solutions for models trained on low-rank targets learn the intrinsic dimension of their targets or span the entire target subspace. We conduct this experiment using two datasets with low rank structure: a synthetic dataset, and SGEMM \cite{sgemm_gpu_kernel_performance_440}, a GPU kernel performance dataset. For the synthetic dataset, we draw $n=10,000$ inputs from a $d=20$ dimensional normal distribution, and we compute $\Y=f_0(\bm{X})$ where $f_0:\mathbb{R}^d \rightarrow\mathbb{R}^t$ is a fully connected neural network with 2 hidden layers of dimension $r=2$. The SGEMM task requires us to predict the runtimes of different matrix multiplication kernels from various input features. The targets are 4 different measurements of the same kernel. This means that even though the target is notionally $t=4$-dimensional, these measurements should be highly correlated, and in fact rank $r=1$.

In order to identify whether deep NRC solutions learn the intrinsic dimension, we make measure NRC1 using the top $r$-dimensional feature subspace rather than the top $t$-dimensional subspace of the features $\bm{H}^\ell$. If the noise component is still small, we can conclude that the models truly learn the intrinsic dimension of the problem. We also measure the stable rank of the features and see whether it corresponds with the stable rank of the targets. For the NRC3 condition, in addition to feature weight alignment in the top $r$-dimensional subspaces of the weights and features, we also measure the alignment between the top $r$-dimensional weight subspace and the bottom ($h-r$)-dimensional feature subspace to measure whether the collapsed layers pass noisy information to subsequent layers.

\paragraph{Results:} The results of these experiments are presented in Figure \ref{fig:low_rank}. The top row presents the measurements on the SGEMM task, while the bottom row presents the low-rank nonlinear target results. The left column contains noise component (NRC1) measurements, the middle column contains the feature stable rank measurements, and the right column contains feature-weight Alignment (NRC3) results. In the right column plots, the signal target-subspace alignments are depicted in purple, while the signal noise-subspace alignments are in salmon. From Figure \ref{fig:low_rank}, we can clearly see that the layer features collapse to the intrinsic dimension of the target, not just the target dimension. Moreover, the signal component of the features aligns with the top $r$-dimensional subspace of the weights, while the noise subspace of the features is orthogonal to to the weights. We can thus make the claim that deep networks that exhibit collapse learn the intrinsic dimension of their targets, indicating feature learning instead of memorization. We also note that this is related to a result in \cite{andriopoulos2025neural} that shows through the unconstrained features (UFM) model that learning a single model on multiple targets can be beneficial. As we have demonstrated, learning a single model on multiple targets allows one to discover the intrinsic relationships between the multiple targets, if they have low rank structure.

\begin{figure*}
     \centering
     \begin{subfigure}[b]{0.3\textwidth}
         \centering
         \includegraphics[width=\textwidth]{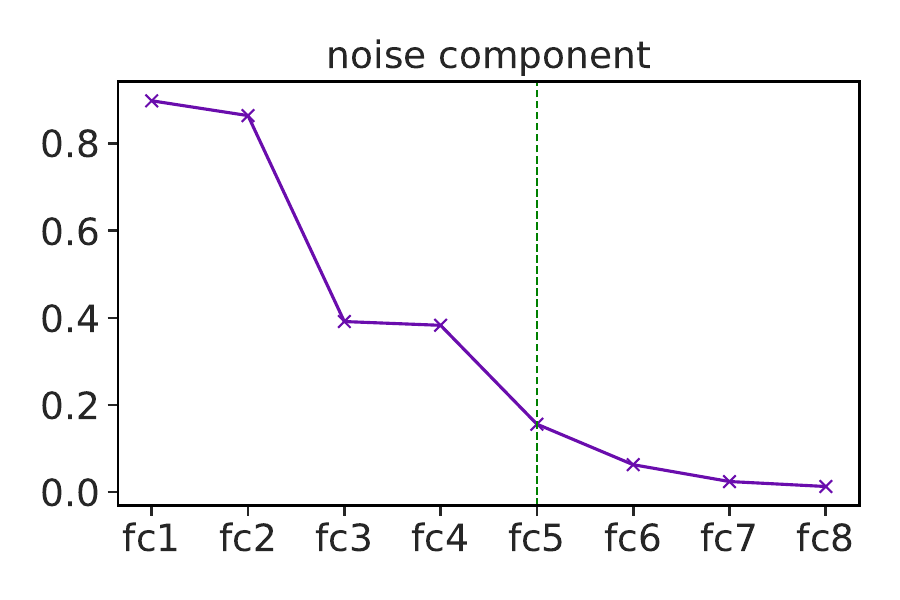}
     \end{subfigure}
     \hfil    
    \begin{subfigure}[b]{0.3\textwidth}
         \centering
         \includegraphics[width=\textwidth]{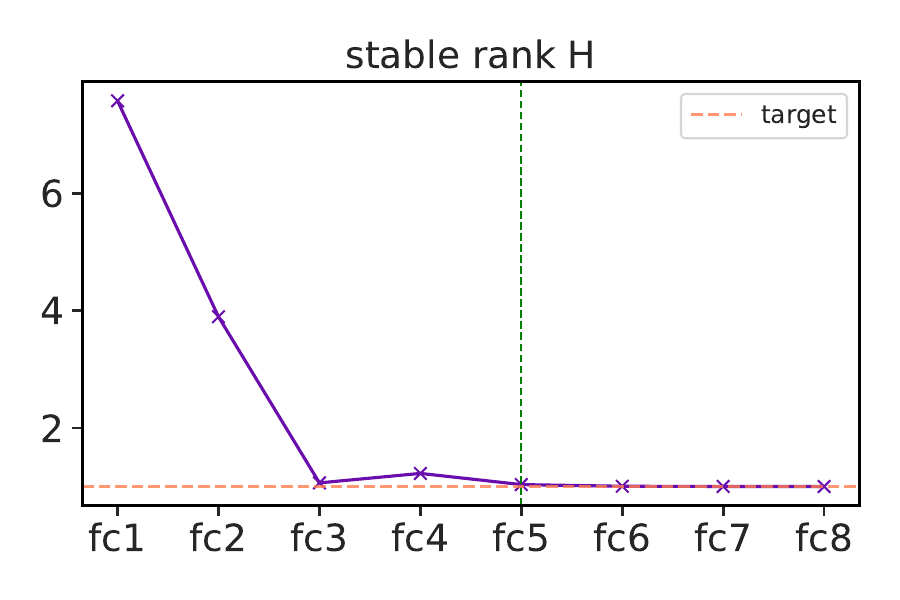}
     \end{subfigure}
     \hfil
     \begin{subfigure}[b]{0.3\textwidth}
         \centering
         \includegraphics[width=\textwidth]{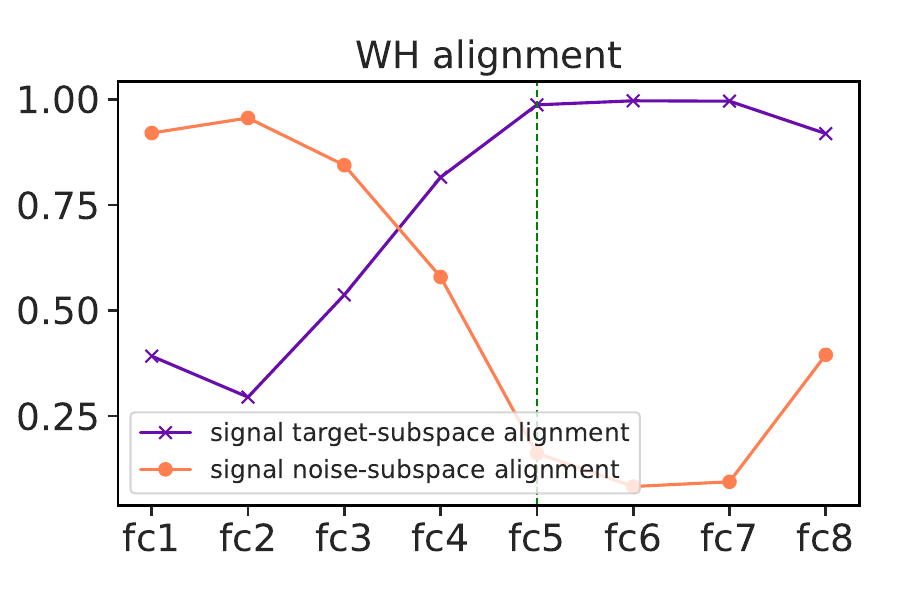}
     \end{subfigure}

     \begin{subfigure}[b]{0.3\textwidth}
         \centering
         \includegraphics[width=\textwidth]{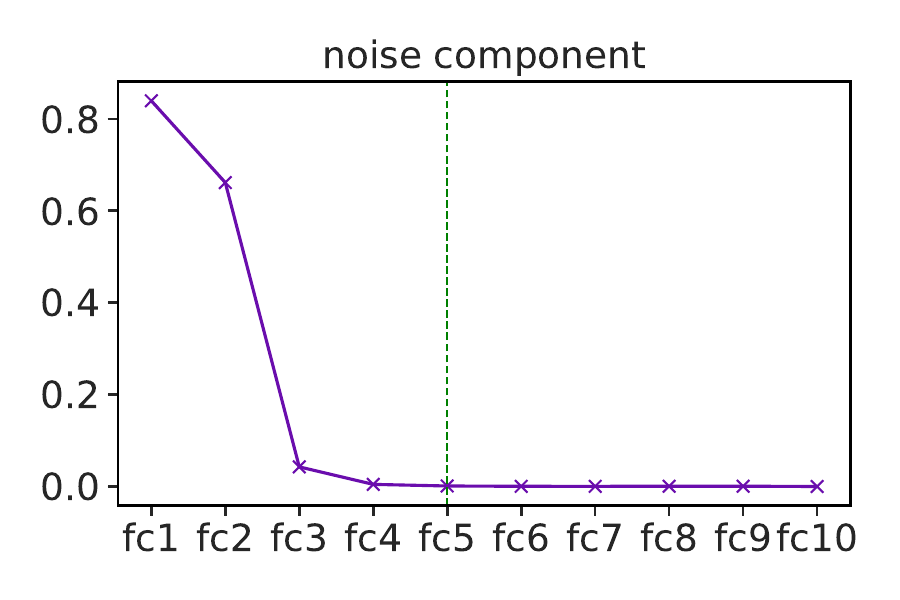}
     \end{subfigure}
     \hfil    
    \begin{subfigure}[b]{0.3\textwidth}
         \centering
         \includegraphics[width=\textwidth]{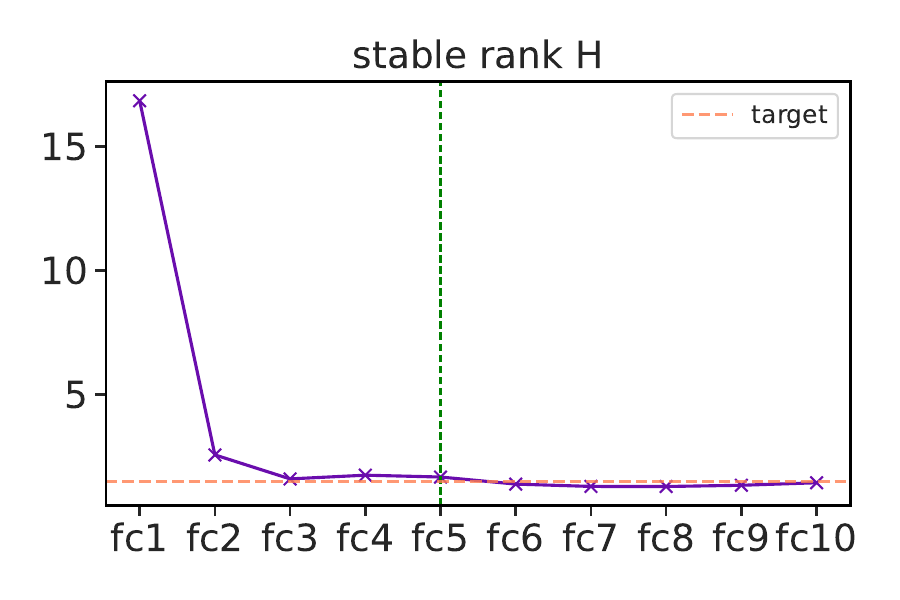}
     \end{subfigure}
     \hfil
     \begin{subfigure}[b]{0.3\textwidth}
         \centering
         \includegraphics[width=\textwidth]{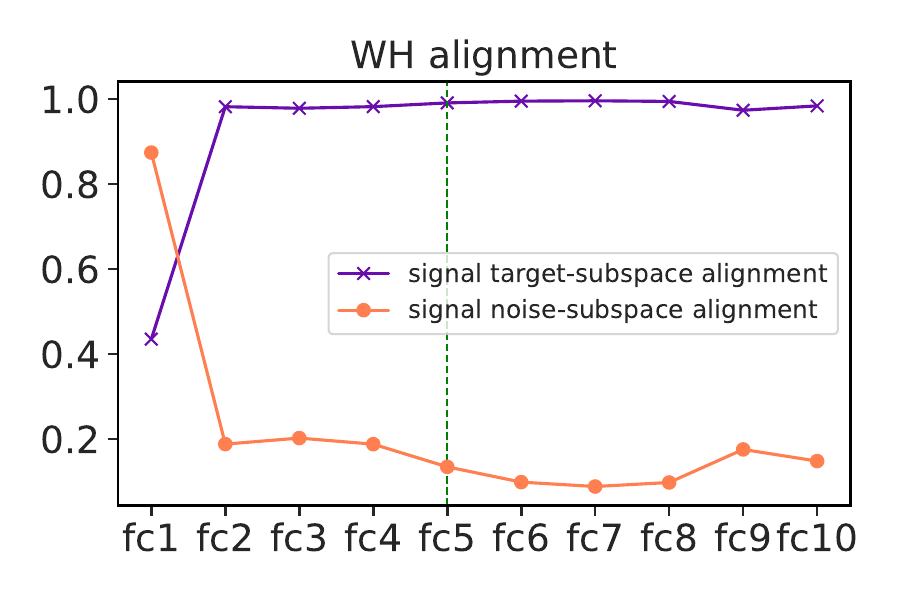}
     \end{subfigure}
        
        \caption{\textbf{Learning intrinsic dimension of low rank targets:} Top row - $8$-layer, $256$-width MLP trained on SGEMM ($4$-dim target, rank-$1$). Bottom row - $10$-layer, $1024$-width MLP trained on synthetic low rank nonlinear dataset ($10$-dim target, rank-$2$). In the left column we plot the noise component using the NRC1 formula, but measured using the bottom ($h-r$) dimensional subspace, not ($h-t$). In the middle column, we plot the stable rank of the layer features and observe that it matches the target stable rank in the collapsed layers. In the right column, we plot the feature-weight alignment (NRC3) using the top-$r$ dimensional subspace in purple and the alignment between the features and the bottom ($h-r$) dimensional subspace of the weights in salmon. These plots establish that collapsed layers learn the intrinsic dimension of the data.}
        \label{fig:low_rank}
\end{figure*}

\subsection{Effect of Weight Decay on Deep NRC}\label{subsec:weight_decay}

In this section, we will investigate the conditions under which deep NRC occurs, and specifically the role of weight decay. In \cite{andriopoulos2024the}, the authors use the unconstrained features model to show that in the presence of weight decay, we can guarantee that NRC will be satisfied at the last layer. Moreover, in the absence of weight decay, we cannot guarantee that training deep networks will find the NRC solution. This is because without weight decay, the problem of training deep networks (in the unconstrained features model) just requires minimizing the MSE loss $\mc{L}(\W, \bm{H}) = \frac{1}{2N} \| \W \bm{H} - \Y \|_F^2$, which results in the family of solutions $\bm{H} = \W^\dagger \Y + (\bm{I} - \W^\dagger \W)\Z$ for any full rank matrix $\W$. This does not guarantee alignment between $\W$ and $\bm{H}$ at the last layer, let alone layers below the last layer. We confirm that this observation still holds for the case of deep NRC. 

We trained ResNet-18 models on Carla2D, MLPs on SGEMM, and a synthetic dataset with different values of weight decay to study how weight decay influences deep NRC. In Figure \ref{fig:wd_comparison}, we present the results from Carla2D (left column) and the synthetic dataset (right column). We compare the noise component (NRC1), feature-weight alignment (NRC3), and the stable rank of the weights for Carla2D for models trained with different values of weight decay between $\lambda=0$ and a reasonable solution $\lambda=5e-3$. We see that for small values of weight decay (in salmon), our Carla2D models do not achieve deep NRC and show high values of the noise component (top left) and low feature-weight alignment (middle left) even though they achieve low train and test loss. The model that exhibits deep NRC is the one trained with sufficient weight decay $\lambda = 5e-3$ (plotted in purple). When we compare the stable rank of the weight matrices in the Carla2D models in the bottom row of the left column, we see that using a higher value of weight decay can result in lower rank weight matrices. The best way to jointly minimize the loss and the rank of weight matrices is to find the deep NRC solution.

However, large values of weight decay can also be detrimental. In our experiments on MLPs trained on the synthetic dataset presented in the right column of Figure \ref{fig:wd_comparison}, we observe that increasing the weight decay to a large value $\lambda = 1e-3$ may result in a model with collapsed layers as indicated by the noise component and feature-weight alignment graphs. However, with excessively high weight decay, we will find models with higher train and test loss. As shown by Theorem 4.1 in \cite{andriopoulos2024the}, if the weight decay exceeds the size of the target covariance, the optimal solution for the unconstrained features model is $\W, \bm{H} = \bm{0}, \bm{0}$.

\begin{figure*}
     \centering
     \begin{subfigure}[b]{0.45\textwidth}
         \centering
         \includegraphics[width=\textwidth]{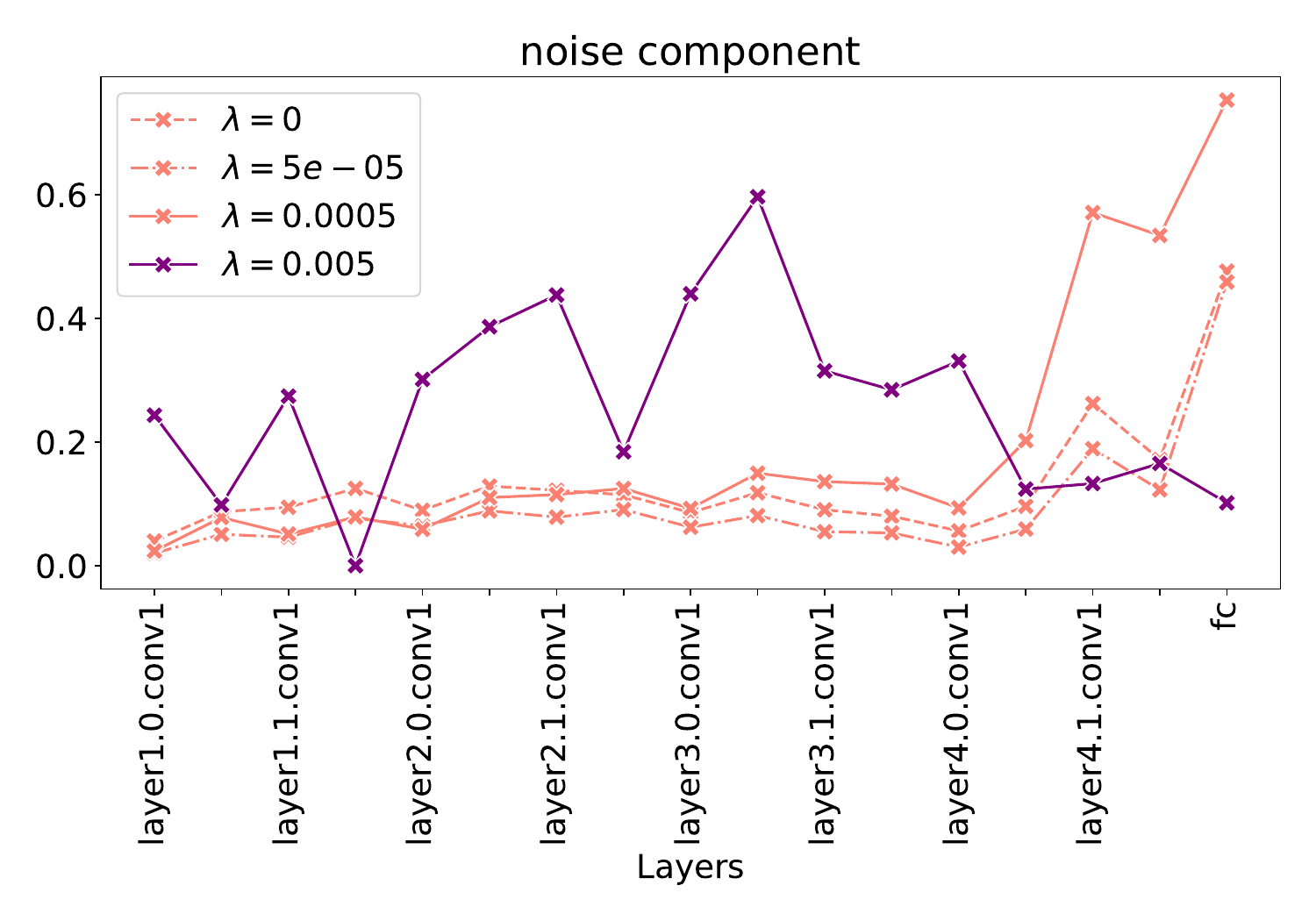}
     \end{subfigure}
     \hfil
     \begin{subfigure}[b]{0.45\textwidth}
         \centering
         \includegraphics[width=\textwidth]{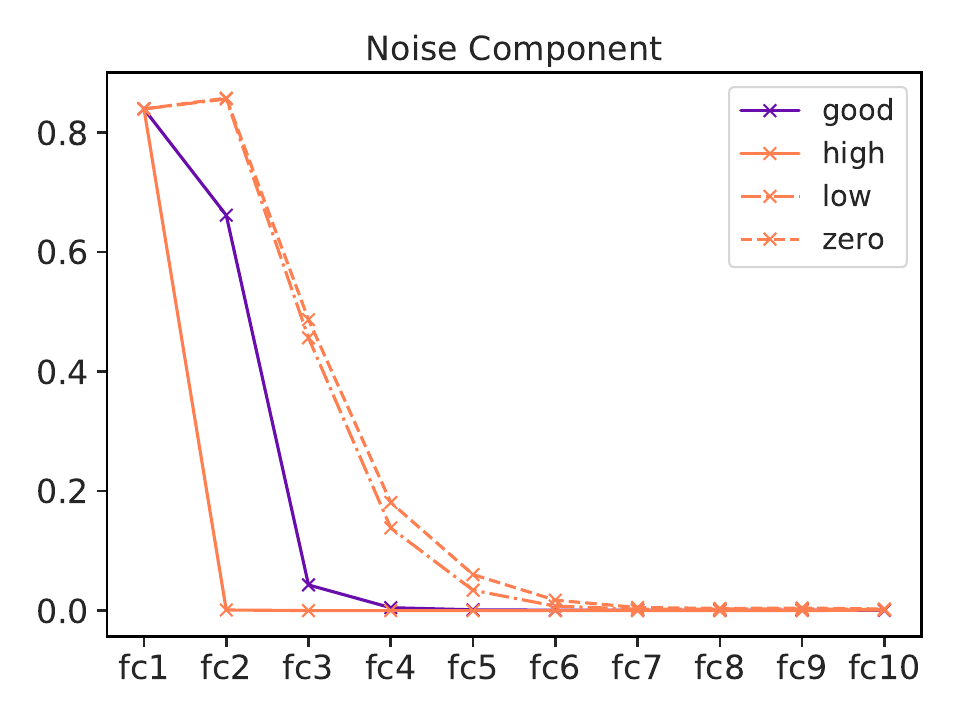}
     \end{subfigure}

     \begin{subfigure}[b]{0.45\textwidth}
         \centering
         \includegraphics[width=\textwidth]{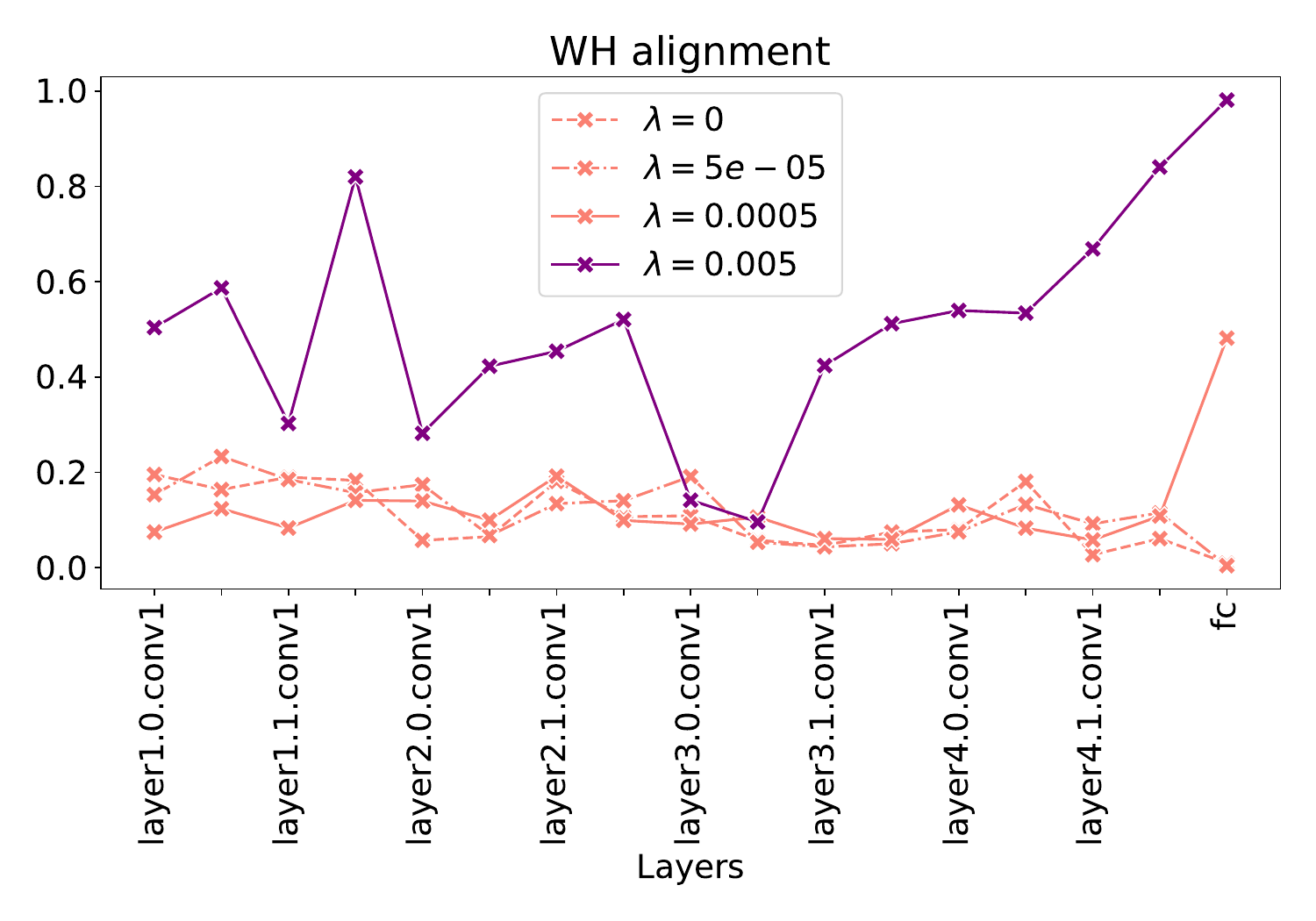}
     \end{subfigure}
    \hfil    
    \begin{subfigure}[b]{0.45\textwidth}
         \centering
         \includegraphics[width=\textwidth]{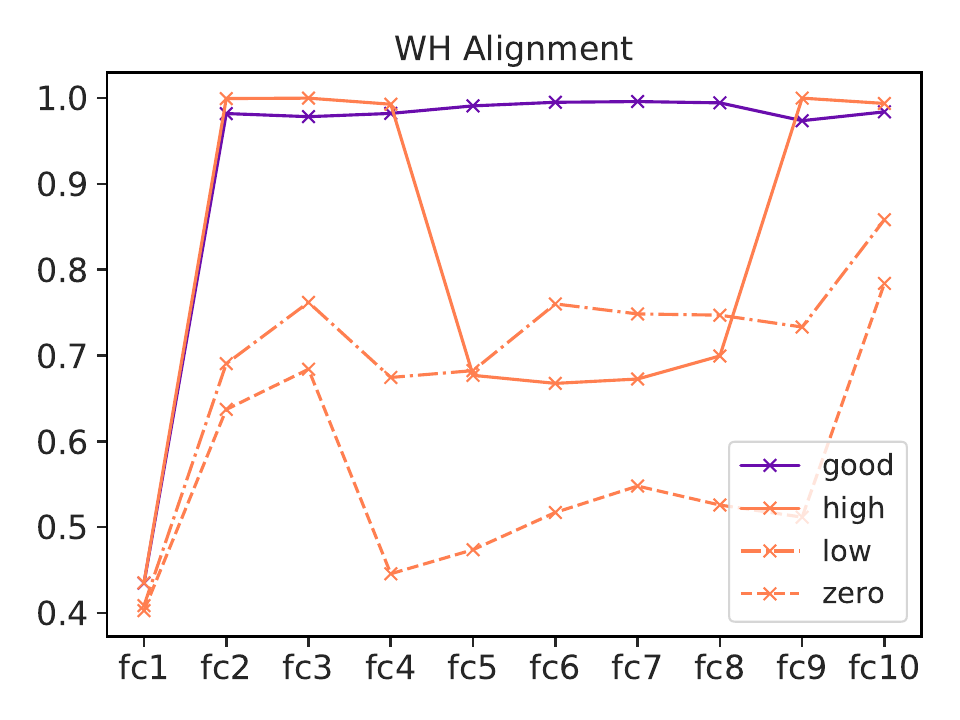}
     \end{subfigure}

    \begin{subfigure}[b]{0.45\textwidth}
         \centering
         \includegraphics[width=\textwidth]{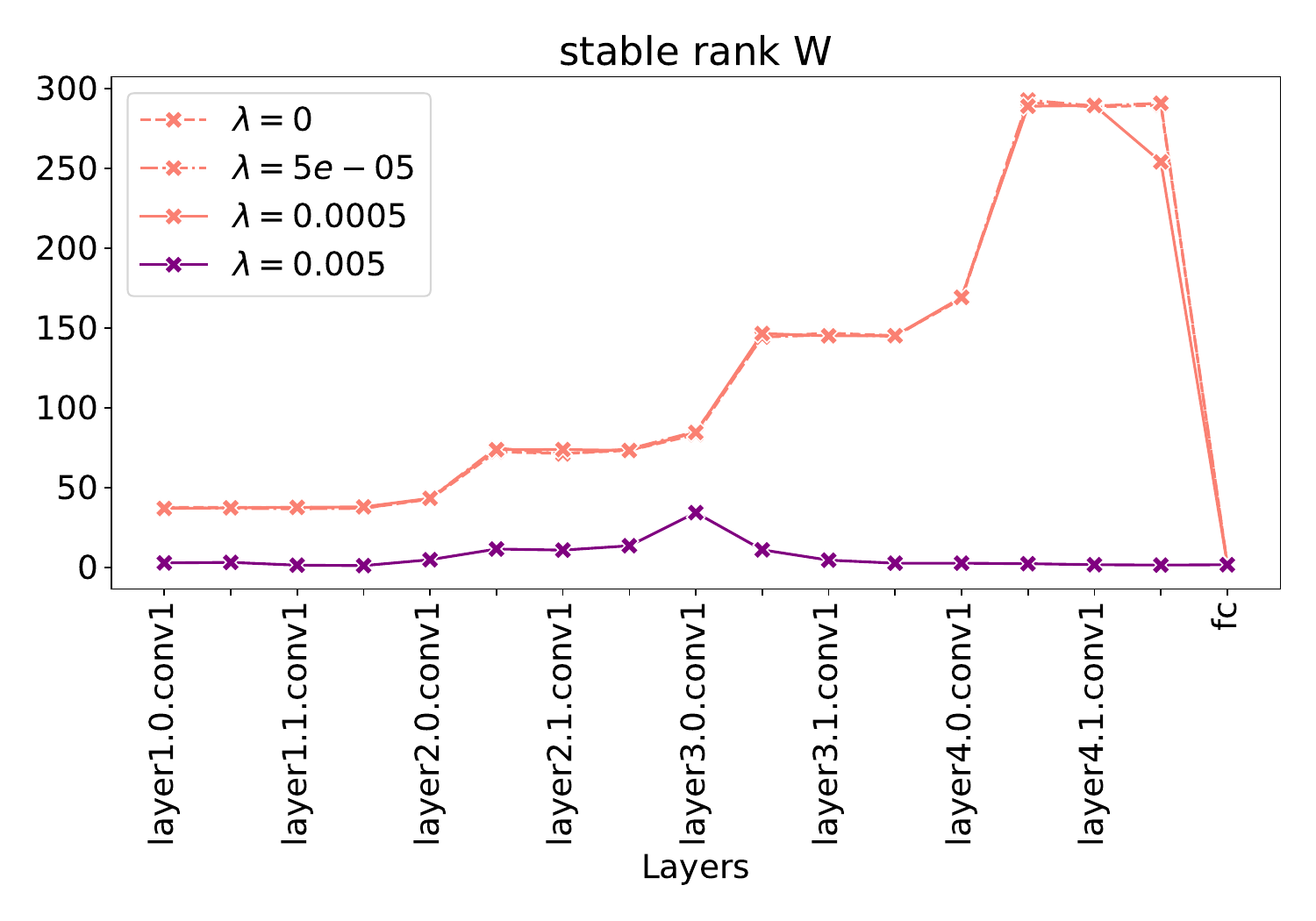}
     \end{subfigure}
     \hfil
     \begin{subfigure}[b]{0.45\textwidth}
         \centering
         \includegraphics[width=\textwidth]{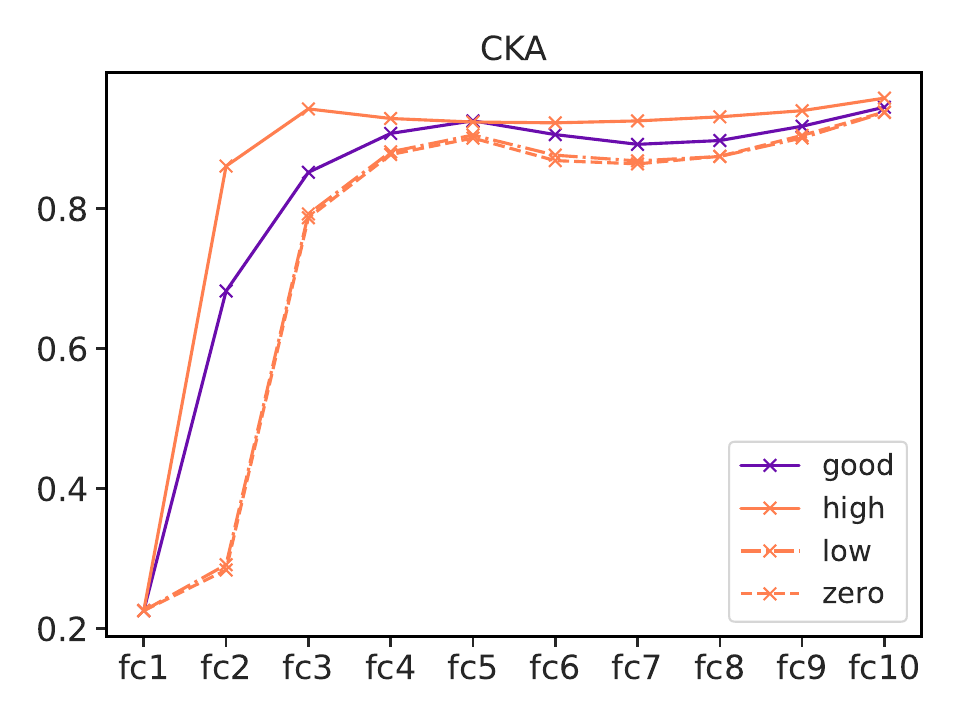}
     \end{subfigure}
        
        \caption{\textbf{Effect of weight decay:} Left Column - We train ResNet18s on CARLA2D with varying values of weight decay, and observe the effects on deep neural regression collapse. In the top and middle rows we plot the NRC1 and NRC3 metrics. The bottom row shows the stable rank of the weights. In each plot, the measurements with the right value of weight decay ($\lambda = 5e-3$) are shown in purple, while the measurements with smaller values of weight decay ($\lambda \in [5e-4, 5e-5, 0]$) are in salmon. This shows that weight decay is necessary to achieve feature-weight alignment, which implies a low rank bias in the weights of the layers. In the right column we explore through an experiment on synthetic data, how larger values of weight decay ($\lambda = 1e-3$) can induce NRC1 and NRC3 (top and middle rows), but perform worse at prediction (bottom row). While weight decay is necessary for observing Deep NRC, too high a value can hinder learning.
        }
        \label{fig:wd_comparison}
\end{figure*}

\subsection{Discussion}\label{subsec:discussion}

In the previous sections we have presented experimental results that demonstrate that the four conditions of Deep NRC defined in section \ref{sec:deepnrc} appear beyond the last layer of deep networks trained on regression problems. A reasonable question that may arise is whether these four conditions uniquely characterize NRC, and whether alternative formulations exist. To answer this in a concrete manner, we define a collapsed layer in a deep network to be one that \emph{exclusively contains information about the target} in its representations and weights. Our NRC conditions in section \ref{sec:deepnrc} attempt to characterize this geometrically by ensuring the representations lie in a low-rank subspace (NRC 1), this low rank subspace aligns with the target (NRC 2), and that this also aligns with the top target subspace of the weights (NRC 3). In Appendix \ref{app:sec:nrc_relationship} we show that the NRC 1,2 ensure that linear predictability (NRC 4) is satisfied. 

Our results show that deep NRC occurs for both full rank (Figure \ref{fig:swimmer_hopper}) and low-rank targets (Figure \ref{fig:low_rank}). Our observation about low rank targets are similar to the observations made in \citet{jacot2023implicit} about representation costs of deep networks and their relationship to low-rank structures. Deep networks that exhibit Deep NRC may lie in the ``rank recovery'' regime where they learn the true rank of the function being estimated. This may be used to estimate the intrinsic dimensionality of a prediction task.

Prior theoretical analysis of neural regression collapse at the last layer through the unconstrained features model \cite{andriopoulos2024the} has shown that global minima of the square loss exhibit NRC. Extending this to deeper layers is however not straightforward as \citet{sukenik2024neural} show that lower-rank solutions that do not exhibit neural collapse exist for multi-class classification beyond the binary case. Our goal in this paper is to establish Deep Neural Regression Collapse as an empirical phenomenon, and we leave the theoretical characterization to future research. It appears that we may need to think beyond the deep unconstrained features model for this. Future theoretical analysis would also ideally characterize whether the NRC conditions always occur together. In Figure \ref{fig:wd_comparison} we see that the NRC1 may be satisfied without NRC3 occurring when the weight decay is too low - meaning the network may solve the regression problem without enforcing any structure on the weights. However if we set the weight decay parameter too high, we may underfit the training data while enforcing collapse - which would lead to a poor NRC2 metric while observing NRC1 and NRC3. More research into the training dynamics in the NRC regime will help answer these questions.

\section{Conclusion}\label{sec:conclusion}
In this paper, we present the four conditions of Deep Neural Regression Collapse, namely Noise Suppression, Signal$-$Target Alignment, Feature-Weight Alignment, and Linear Predictability. We obtain these conditions as a principled extension of the NC conditions in classification. We show that Neural Regression Collapse occurs beyond the last layer on a variety of model architectures and datasets. In addition, we also explore the necessity of weight decay, and observe that Deep NRC solutions capture the intrinsic dimension rather than the ambient dimension of low-rank targets, showing that inducing Deep NRC can learn generalizable solutions. 
This paper provides further evidence for a universal bias towards minimal depth for deep networks.

In the future, we would like to leverage the bias of Deep Neural Regression Collapse to design more efficient training algorithms. It also remains to prove theoretically that deep NRC is a consequence of training deep networks with MSE loss. Other potential directions for future research include using the trained models to uncover intrinsic relationships between target variables, and using the low-rank structure of Deep NRC for efficient model editing.

\bibliography{references}

\newpage
\appendix

\section{Experimental Settings}

We conduct experiments with MLP and ResNet architectures. We trained our models to minimize the mean squared error (MSE) loss using a stochastic gradient descent (SGD) optimizer with a momentum of 0.9 and employed a multi-step learning rate scheduler. We trained MLP models for MuJoCo environments and SGEMM dataset. We trained 8-layer MLPs with $h=256$ dimensional hidden layers for Swimmer and Hopper environments, and 8-layer MLPs with $h=512$ dimensional hidden layers for Reacher environment and SGEMM dataset. Our initial learning rates are 0.005 for SGEMM, 0.05 for Reacher, and 0.1 for both Hopper and Swimmer environments. In addition, we used weight decay parameters of $5e-3$ for Swimmer, $1e-3$ for Hopper and SGEMM, and $1e-4$ for Reacher. We trained all MLPs for 1000 epochs, except for SGEMM with a batch size of 128. The MLP model is trained for 200 epochs for SGEMM.

For ResNet experiments, we trained a ResNet-18 for Carla2D and a ResNet-34 for UTKFace. The initial learning rate for Carla2D is 0.01 and 0.001 is the initial learning rate for UTKFace training.  Additionally, our weight decay hyperparameters are $5e-3$ and $5e-4$ for Carla2D and UTKFace, respectively. We trained ResNet-18 for 250 epochs on Carla2D while ResNet-34 was trained for 100 epochs on UTKFace. The batch size for Carla2D is 128, while the batch size for UTKFace is 512.

\section{Additional Figures}
\subsection{Loss Plots of Collapsed Models}
The training and test plots for the collapsed models are presented in Figure \ref{fig:loss} due to the lack of space.
\begin{figure*}[hb!]
     \centering
     \begin{subfigure}[b]{0.3\textwidth}
         \centering
         \includegraphics[width=\textwidth]{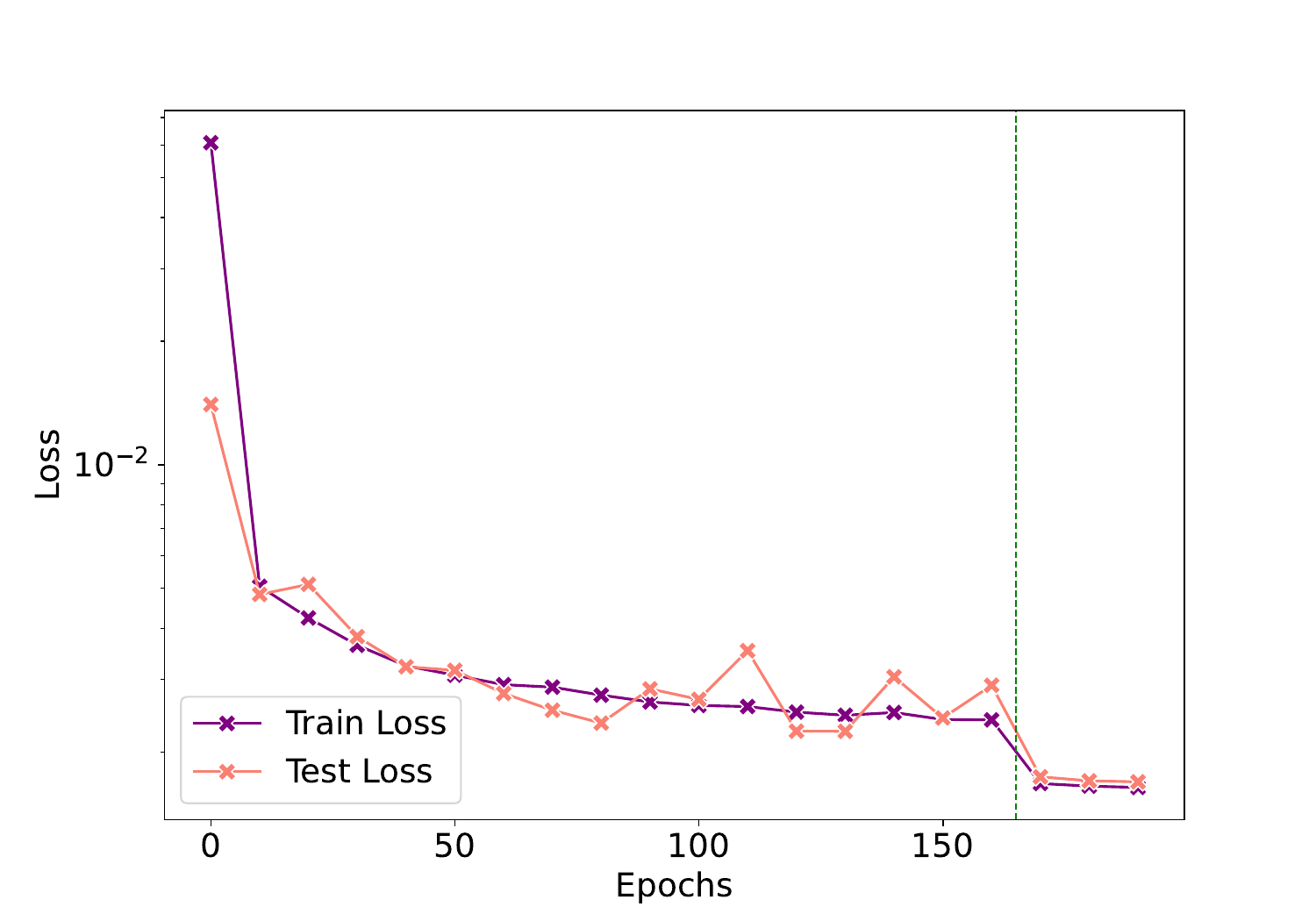}
     \end{subfigure}
    \hfil
    \begin{subfigure}[b]{0.3\textwidth}
        \centering
        \includegraphics[width=\textwidth]{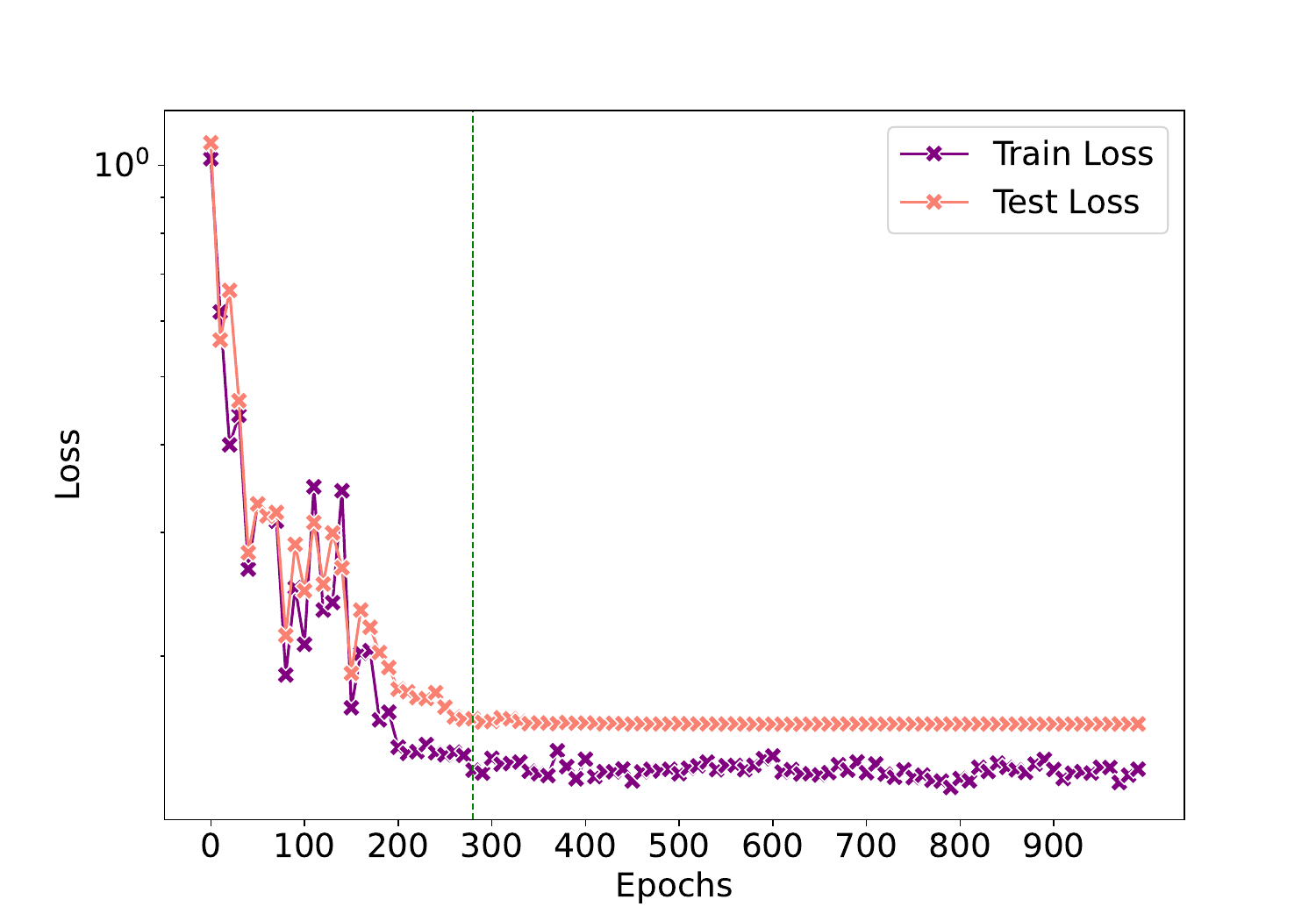}
    \end{subfigure}
    
    \begin{subfigure}[b]{0.3\textwidth}
         \centering
         \includegraphics[width=\textwidth]{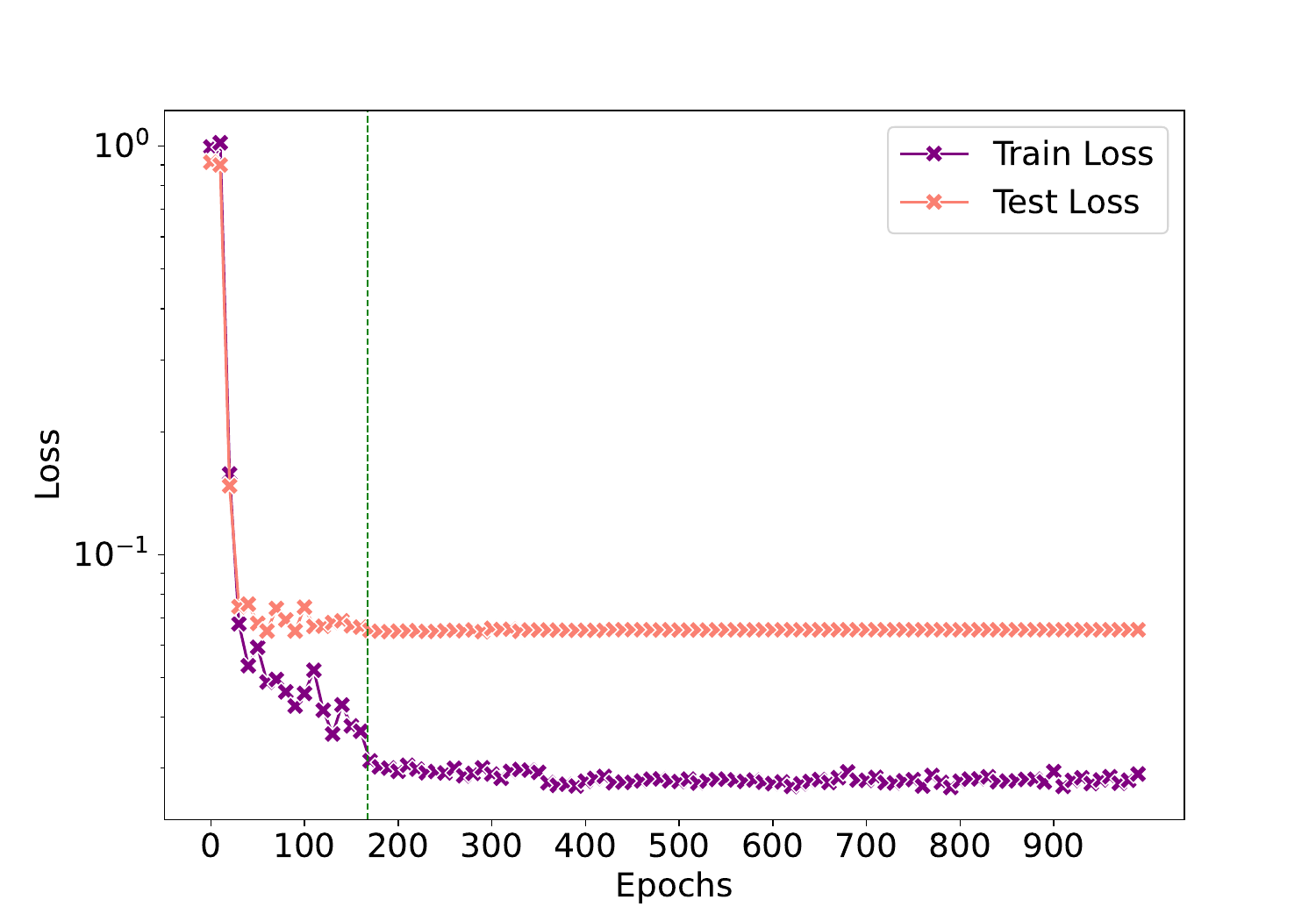}
     \end{subfigure}
     \hfil
     \begin{subfigure}[b]{0.3\textwidth}
         \centering
         \includegraphics[width=\textwidth]{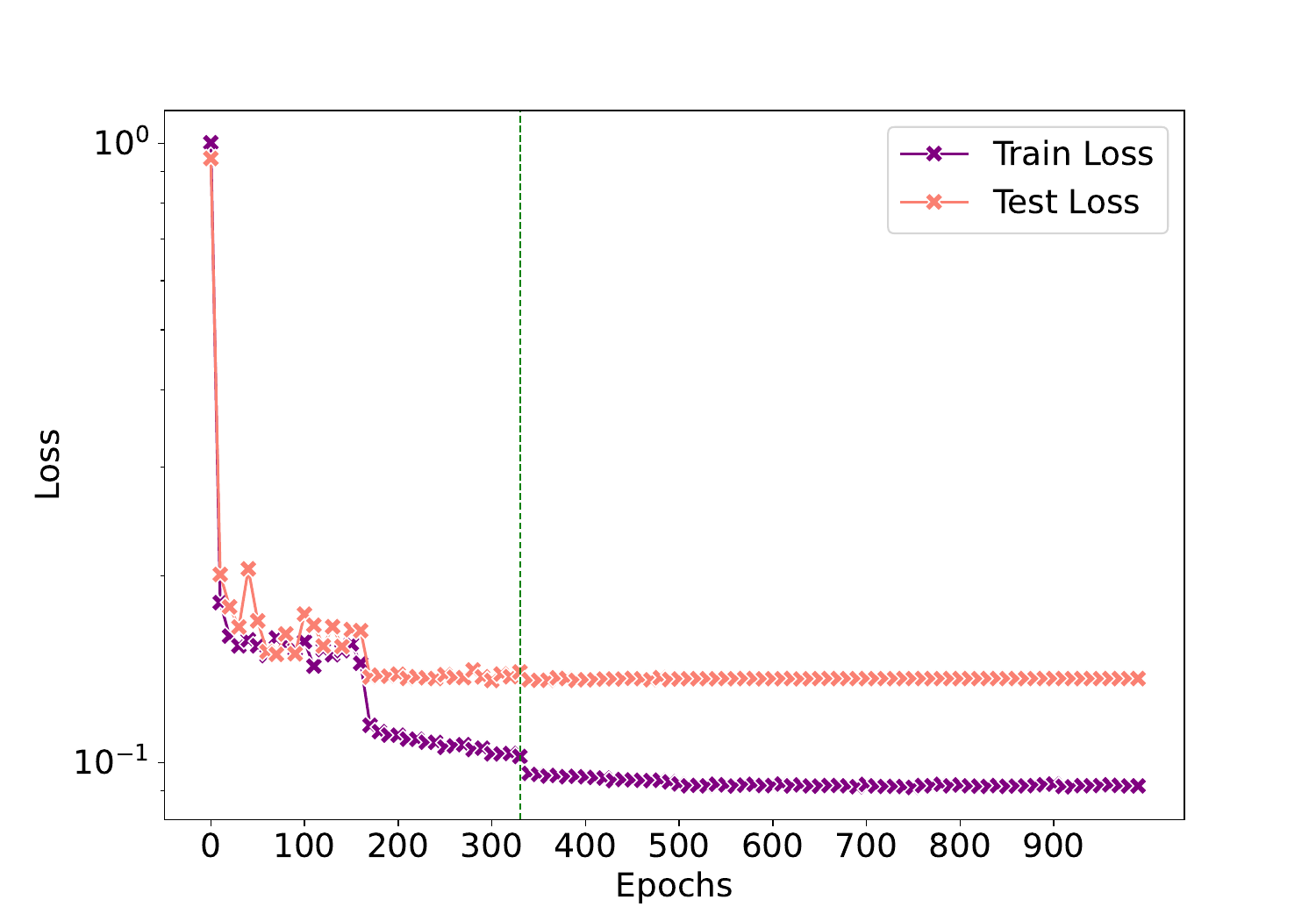}
     \end{subfigure}
     
     \begin{subfigure}[b]{0.3\textwidth}
         \centering
         \includegraphics[width=\textwidth]{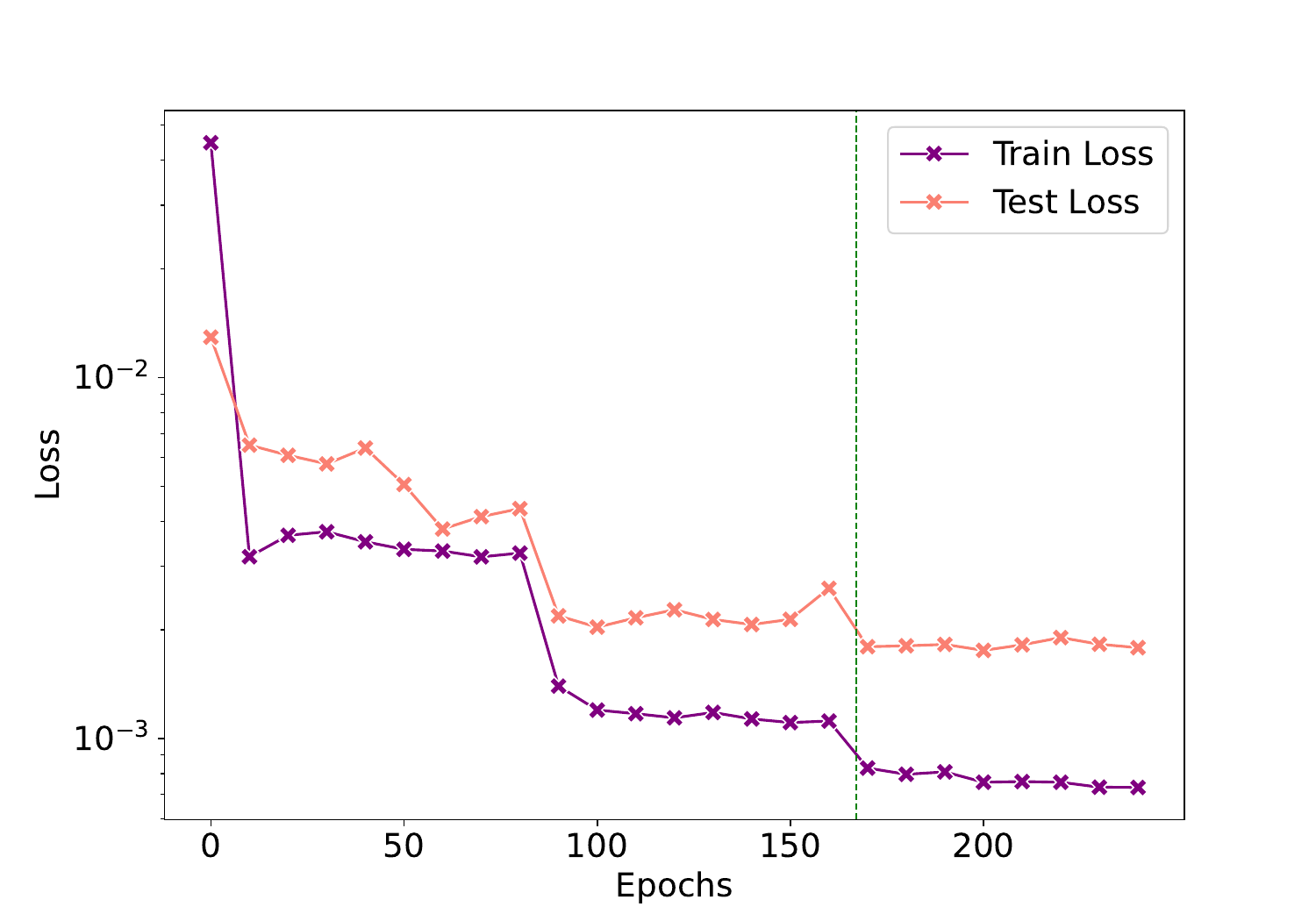}
     \end{subfigure}
    \hfil
    \begin{subfigure}[b]{0.3\textwidth}
        \centering
        \includegraphics[width=\textwidth]{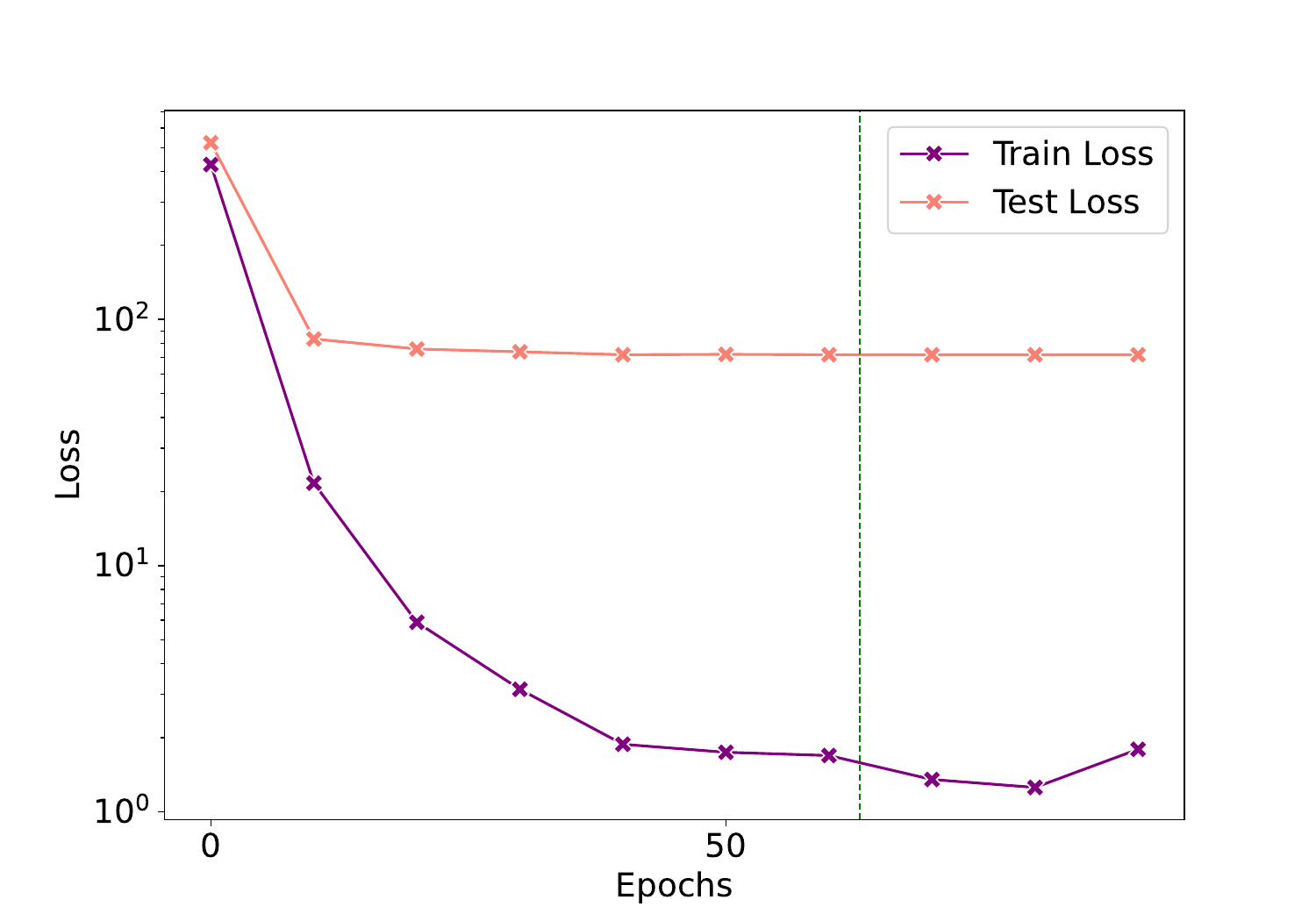}
    \end{subfigure}
        \caption{\textbf{Loss Plots:} The training and test loss plots for the collapsed models trained on SGEMM (top left), Swimmer (top right), Reacher (middle left), Hopper (middle right), Carla2D (bottom left), and UTKFace (bottom right) are shown. The vertical green line indicates the epoch at which the trained models first start experiencing Deep NRC. We observe that the collapsed models exhibit Deep NRC after both training and test losses are stabilized. We also observe that the generalization gap is small for the models experiencing Deep NRC, showing that Deep NRC promotes the generalization capabilities of the deep learning models.
        }
        \label{fig:loss}
\end{figure*}

\subsection{Deep NRC on Additional Datasets}
We present results on the Reacher dataset, SGEMM, and synthetic data experiments that could not be included in the main paper due to the lack of space.

\begin{figure*}[htb!]
     \centering
     \begin{subfigure}[b]{0.3\textwidth}
         \centering
         \includegraphics[width=\textwidth]{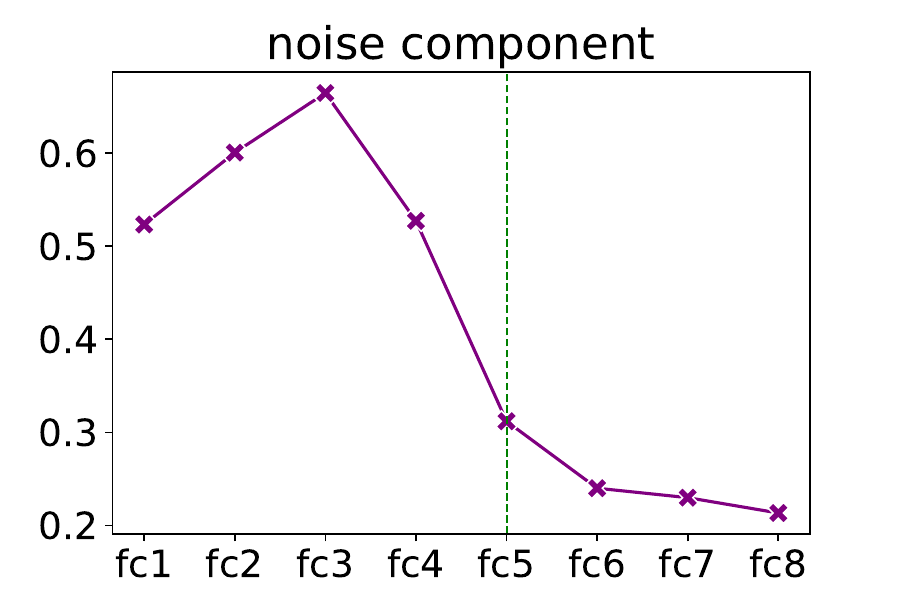}
     \end{subfigure}
    \hfil
    \begin{subfigure}[b]{0.3\textwidth}
        \centering
        \includegraphics[width=\textwidth]{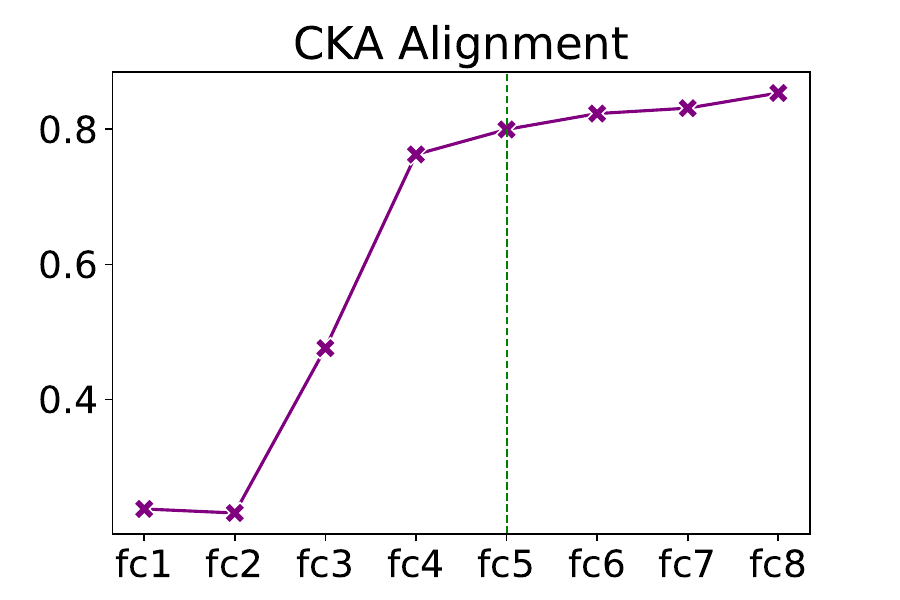}
    \end{subfigure}
    
    \begin{subfigure}[b]{0.3\textwidth}
         \centering
         \includegraphics[width=\textwidth]{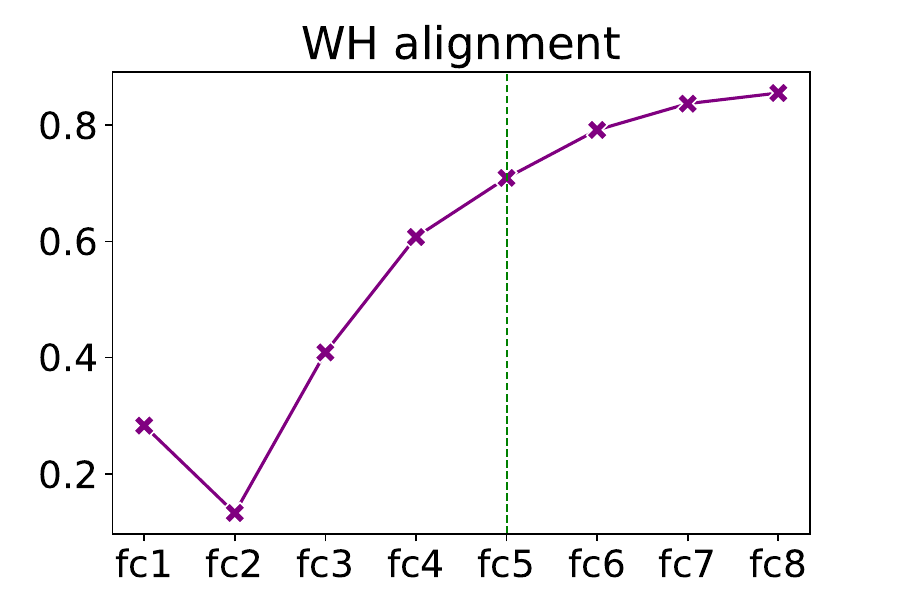}
     \end{subfigure}
     \hfil
     \begin{subfigure}[b]{0.3\textwidth}
         \centering
         \includegraphics[width=\textwidth]{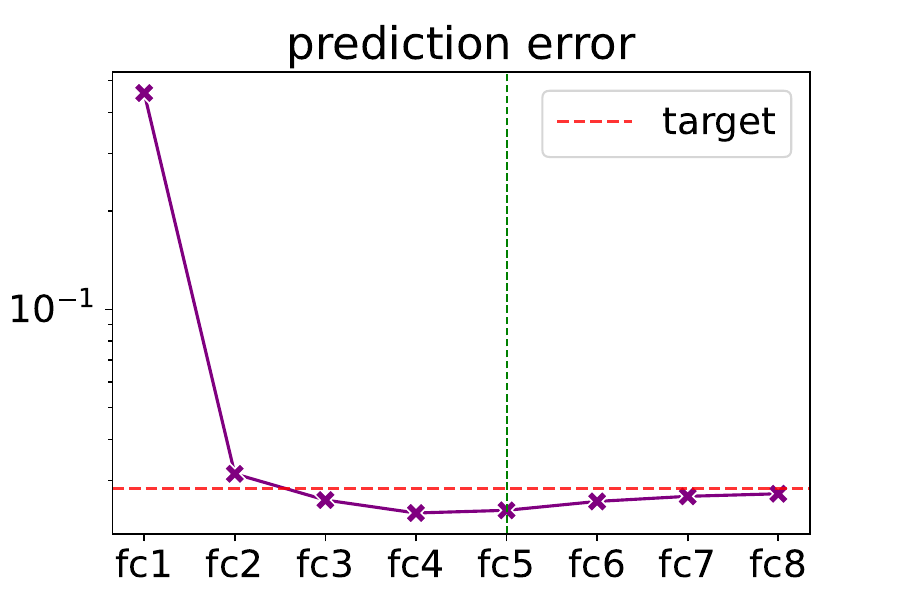}
     \end{subfigure}
        \caption{\textbf{Results on Reacher dataset:} Top row - Noise Suppression (NRC1) shown in the left plot and CKA alignment (NRC2) shown in the right plot . Bottom row - Feature-Weight Alignment (NRC3) shown in the left plot and the right plot shows Linear Predictability (NRC4) or the MSE of predicting the target from the layer features using the pseudo inverse method (overall train loss shown for reference). We observe that all four conditions of collapse occur not just in the last layer, but also in the previous layers.
        }
        \label{fig:reacher}
\end{figure*}

\begin{figure*}[htb!]
     \centering
     \begin{subfigure}[b]{0.3\textwidth}
         \centering
         \includegraphics[width=\textwidth]{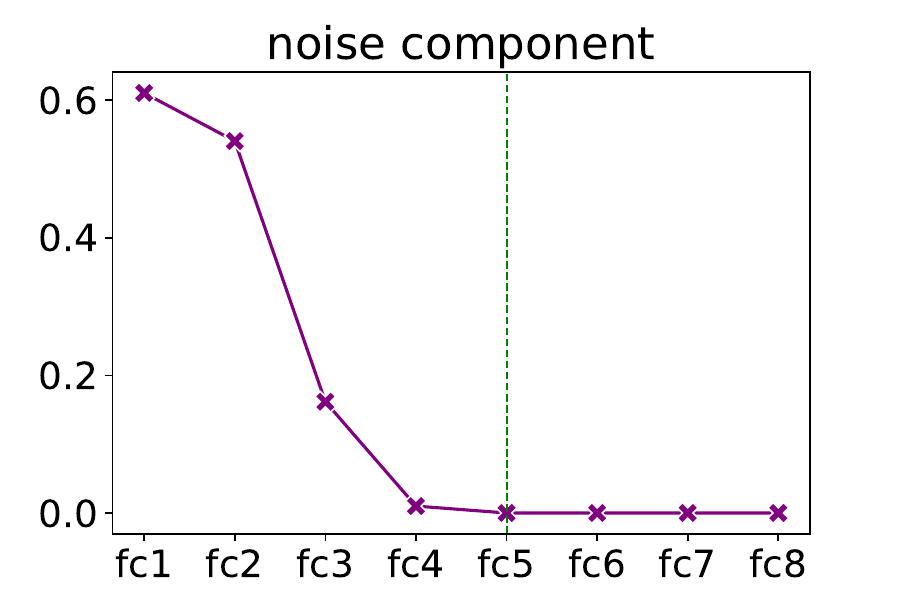}
     \end{subfigure}
    \hfil
    \begin{subfigure}[b]{0.3\textwidth}
        \centering
        \includegraphics[width=\textwidth]{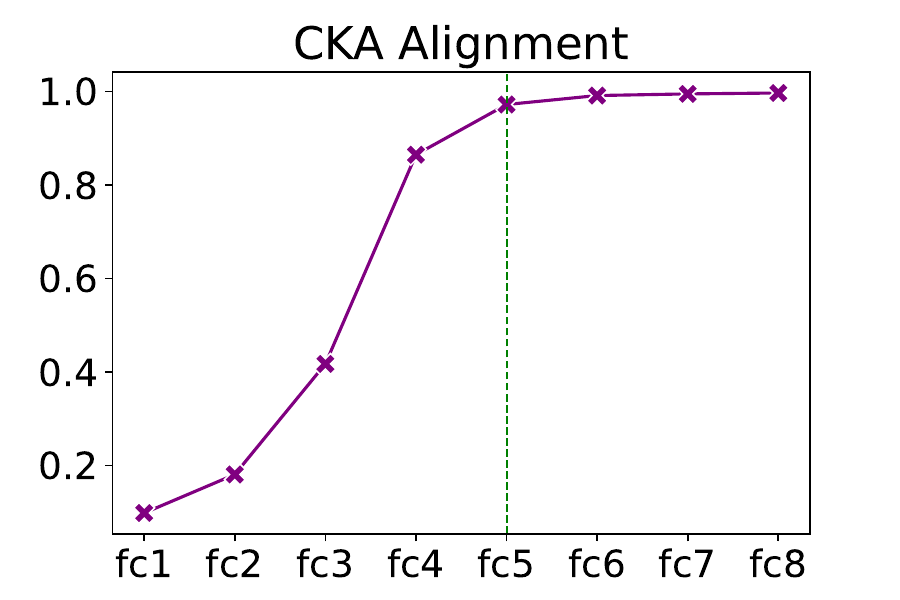}
    \end{subfigure}
    
    \begin{subfigure}[b]{0.3\textwidth}
         \centering
         \includegraphics[width=\textwidth]{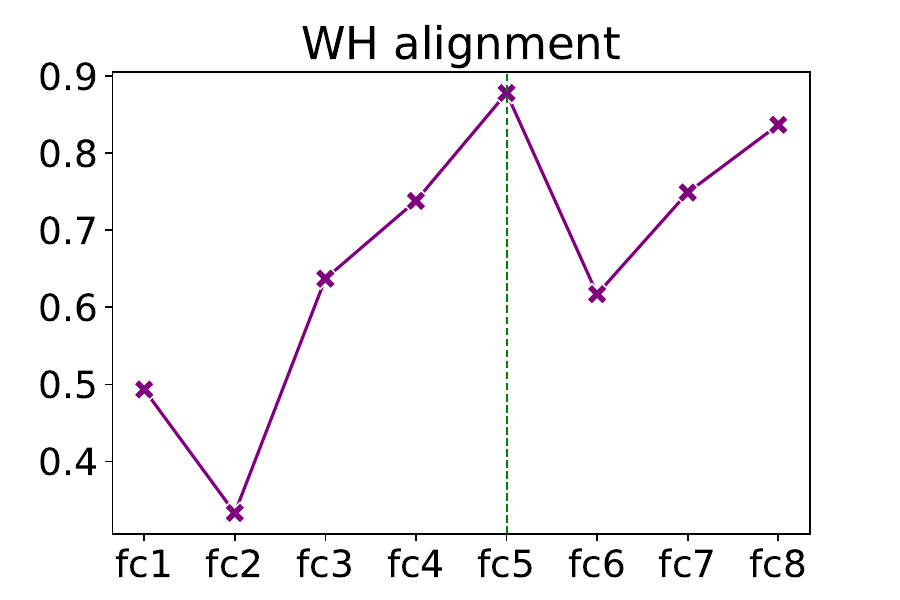}
     \end{subfigure}
     \hfil
     \begin{subfigure}[b]{0.3\textwidth}
         \centering
         \includegraphics[width=\textwidth]{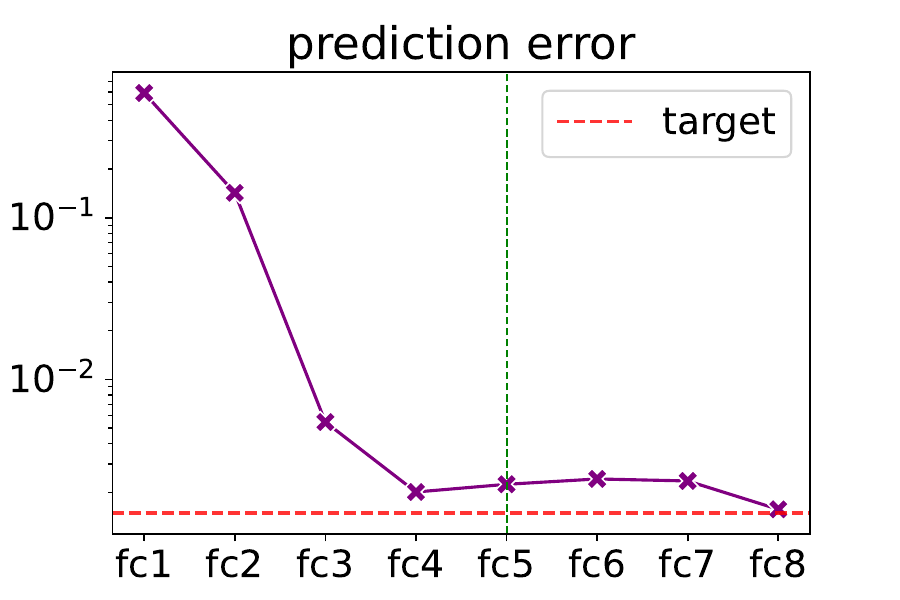}
     \end{subfigure}
        \caption{\textbf{Results on SGEMM dataset:} Top row - Noise Suppression (NRC1) shown in the left plot and CKA alignment (NRC2) shown in the right plot . Bottom row - Feature-Weight Alignment (NRC3) shown in the left plot and the right plot shows Linear Predictability (NRC4) or the MSE of predicting the target from the layer features using the pseudo inverse method (overall train loss shown for reference). We observe that all four conditions of collapse occur not just in the last layer, but also in the previous layers.
        }
        \label{fig:sgemm}
\end{figure*}
\begin{figure*}[htb!]
     \centering
     \begin{subfigure}[b]{0.3\textwidth}
         \centering
         \includegraphics[width=\textwidth]{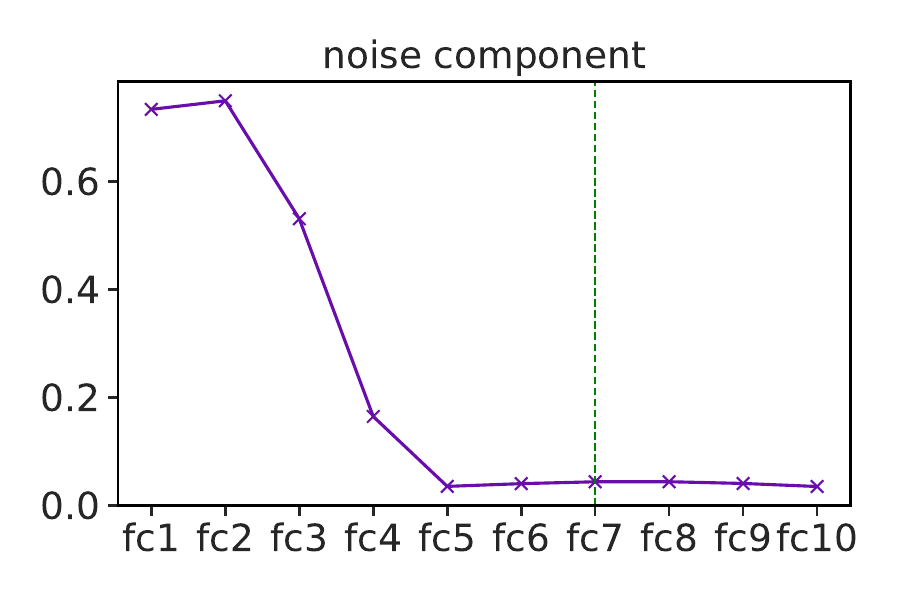}
     \end{subfigure}
     \hfil
     \begin{subfigure}[b]{0.3\textwidth}
         \centering
         \includegraphics[width=\textwidth]{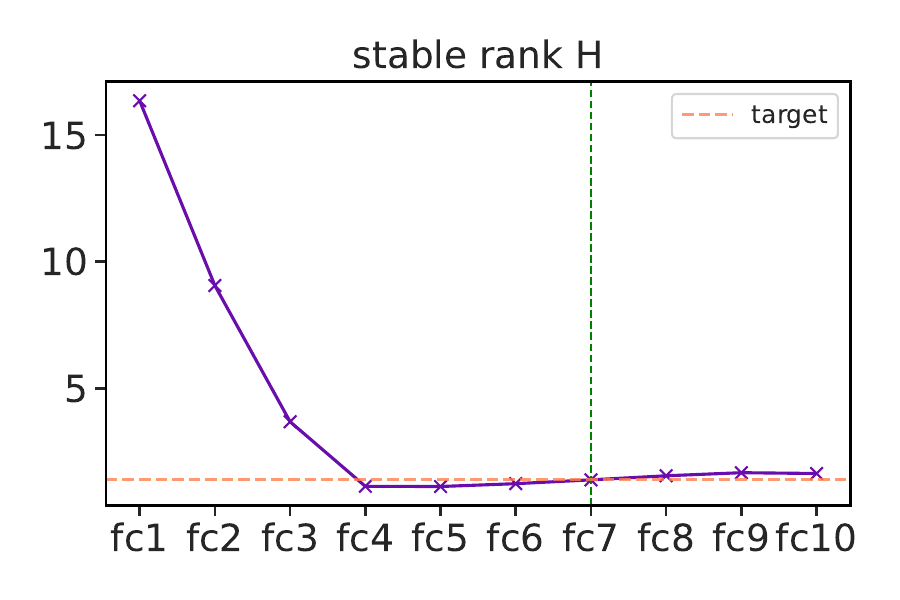}
     \end{subfigure}
    \hfil
    \begin{subfigure}[b]{0.3\textwidth}
        \centering
        \includegraphics[width=\textwidth]{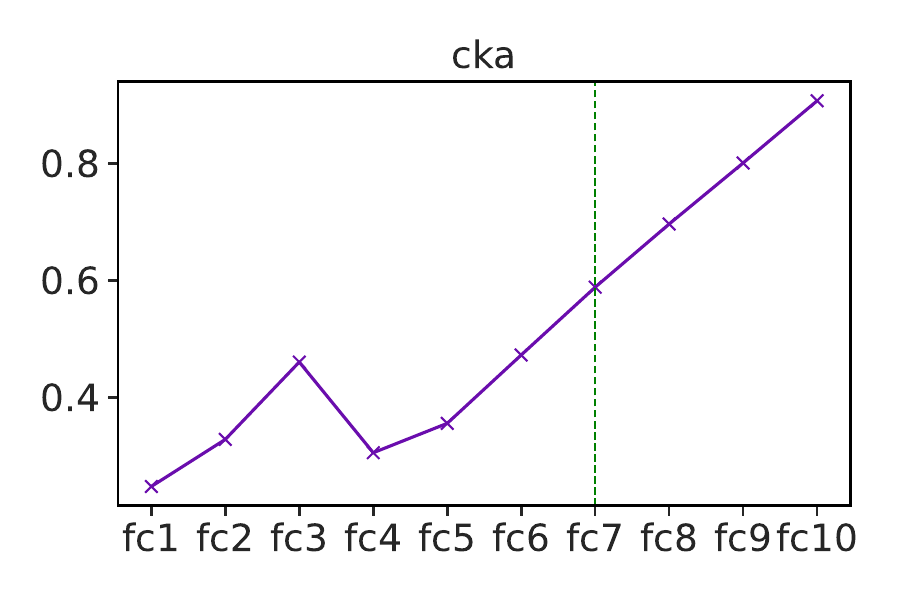}
    \end{subfigure}
    
    \begin{subfigure}[b]{0.3\textwidth}
         \centering
         \includegraphics[width=\textwidth]{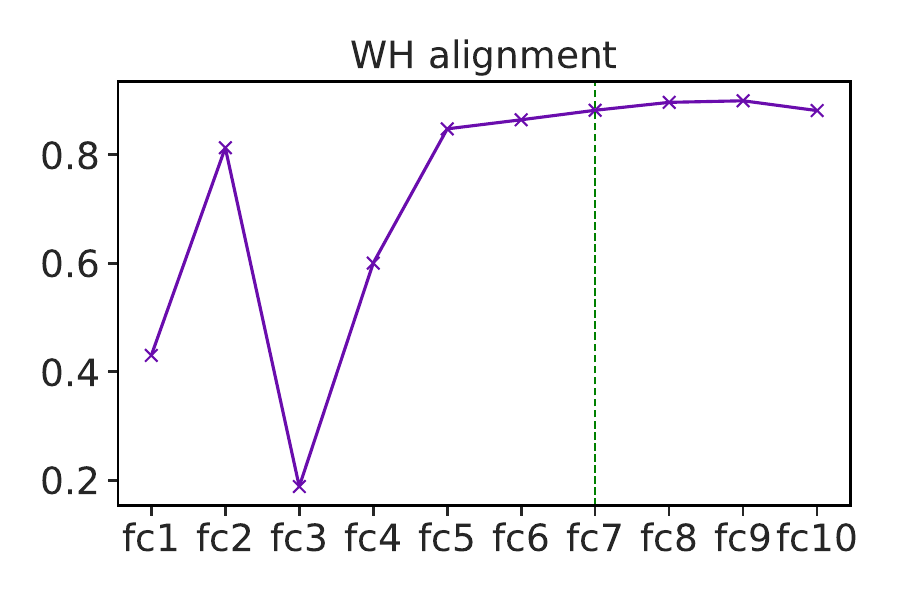}
     \end{subfigure}
     \hfil
     \begin{subfigure}[b]{0.3\textwidth}
         \centering
         \includegraphics[width=\textwidth]{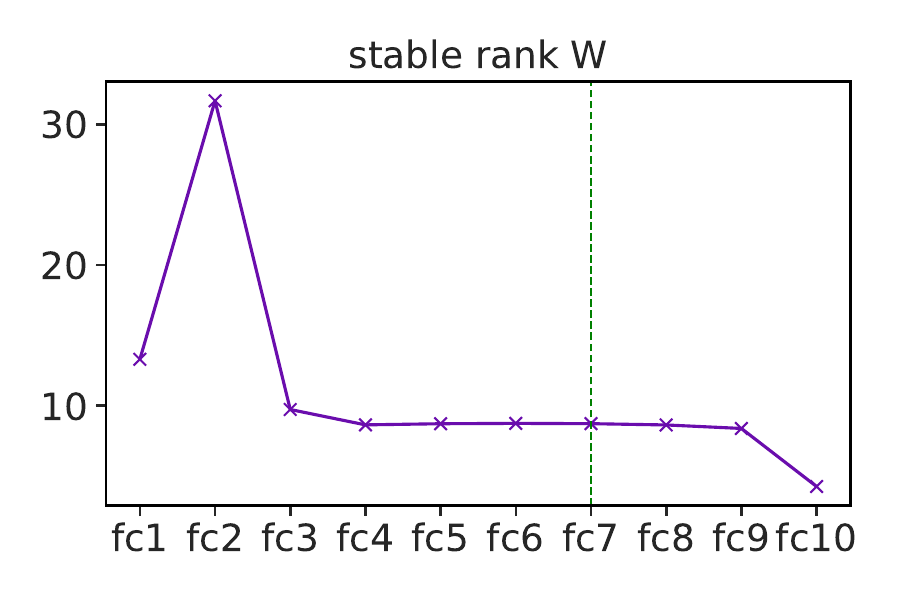}
     \end{subfigure}
     \hfil
     \begin{subfigure}[b]{0.3\textwidth}
         \centering
         \includegraphics[width=\textwidth]{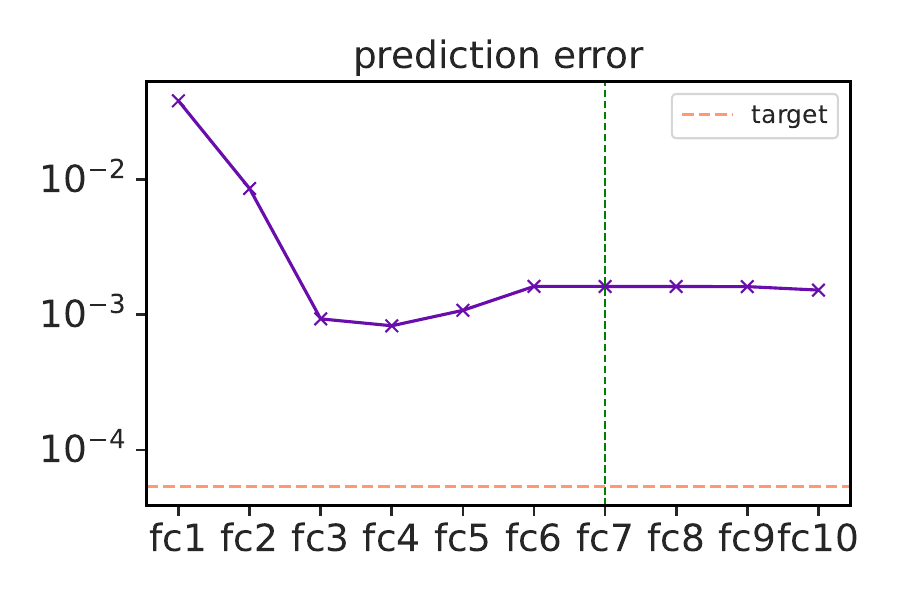}
     \end{subfigure}
        \caption{\textbf{Results on synthetic nonlinear dataset:} Top row - Noise Suppression (NRC1) shown in the left two plots - noise component and stable rank $\bm{H}$ (target stable rank shown for reference) - and CKA alignment (NRC2) shown in the right most plot. Bottom row - Feature-Weight Alignment (NRC3) shown in the left most plot, middle plot measures the stable rank of the weight matrix $\bm{W}$ and the right most plot shows Linear Predictability (NRC4) or the MSE of predicting the target from the layer features using the pseudo inverse method (overall train loss shown for reference). We observe that all four conditions of collapse occur not just in the last layer, but also in the previous layers. Inputs are drawn from a $20$-dimensional normal distribution. $t=3$ dimensional targets for this model were generated from a fully-connected neural network with 2 hidden layers of dimension $r=10$.
        }
        \label{fig:syn_mlp}
\end{figure*}
\newpage

\subsection{Weight Decay Plots for SGEMM Dataset}
We present our results regarding the weight decay experiments on SGEMM dataset due to the lack of space.

\begin{figure*}[h!]
     \centering
     \begin{subfigure}[b]{0.3\textwidth}
         \centering
         \includegraphics[width=\textwidth]{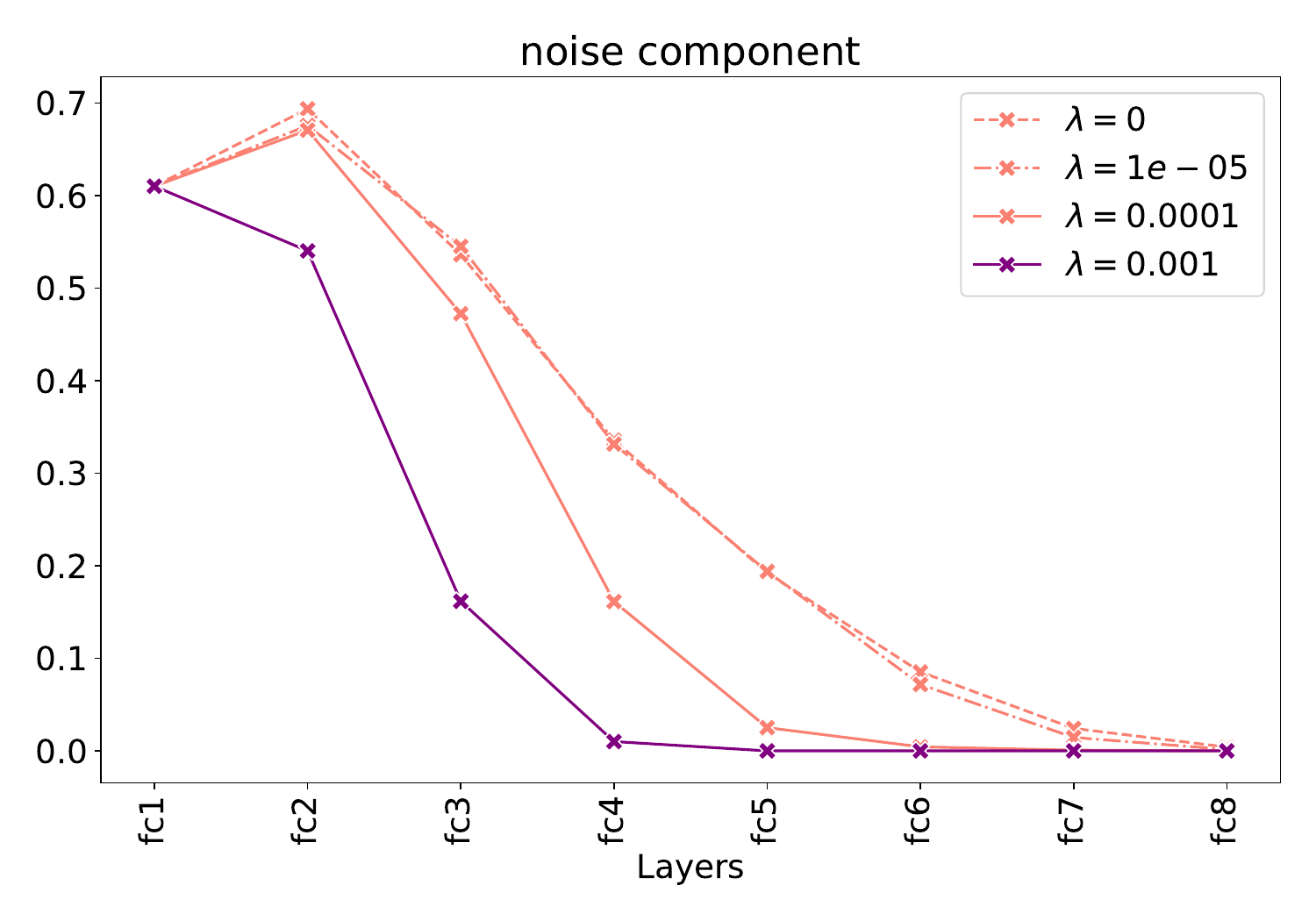}
     \end{subfigure}
    \hfil
     \begin{subfigure}[b]{0.3\textwidth}
         \centering
         \includegraphics[width=\textwidth]{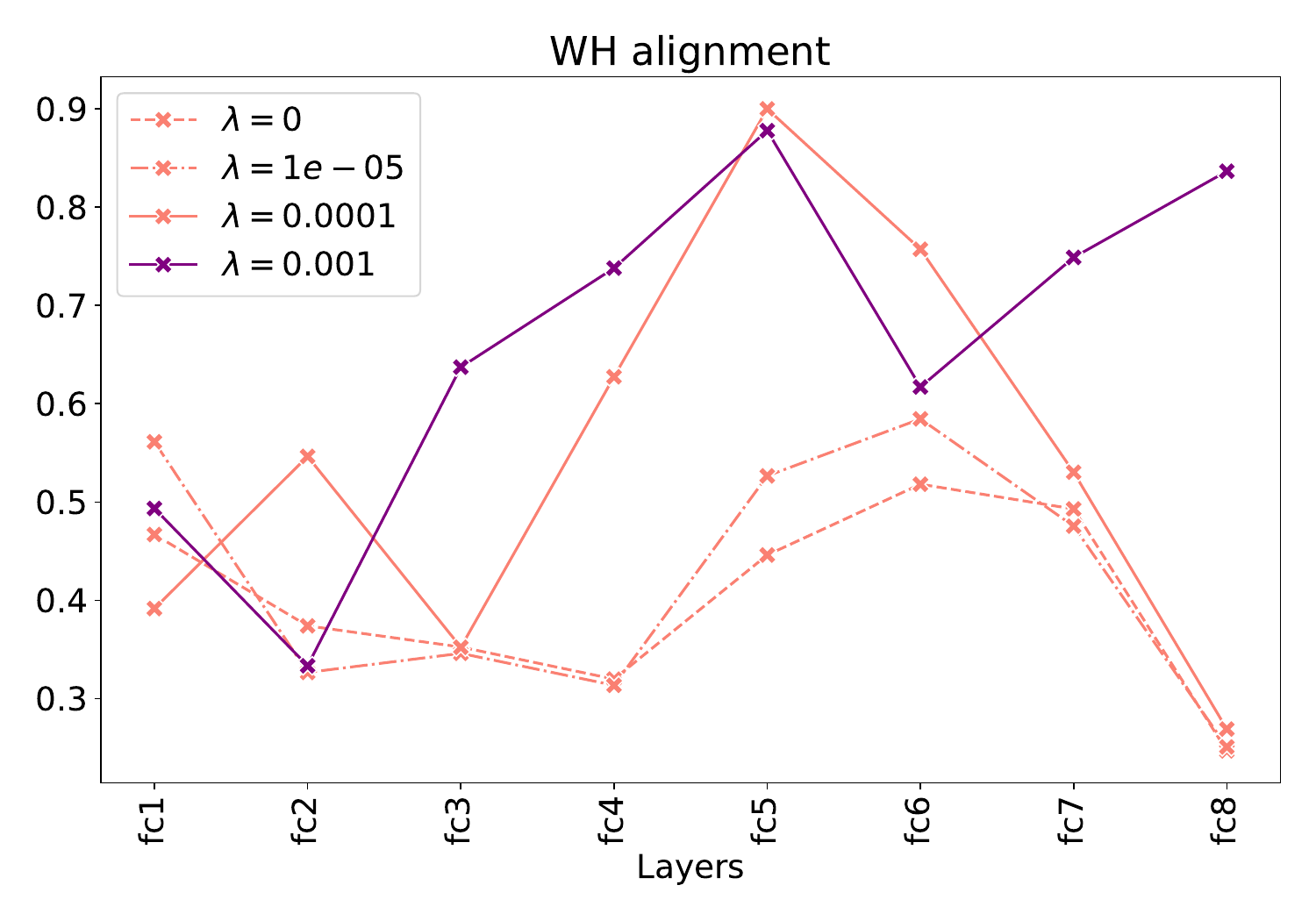}
     \end{subfigure}
    \hfil
    \begin{subfigure}[b]{0.3\textwidth}
         \centering
         \includegraphics[width=\textwidth]{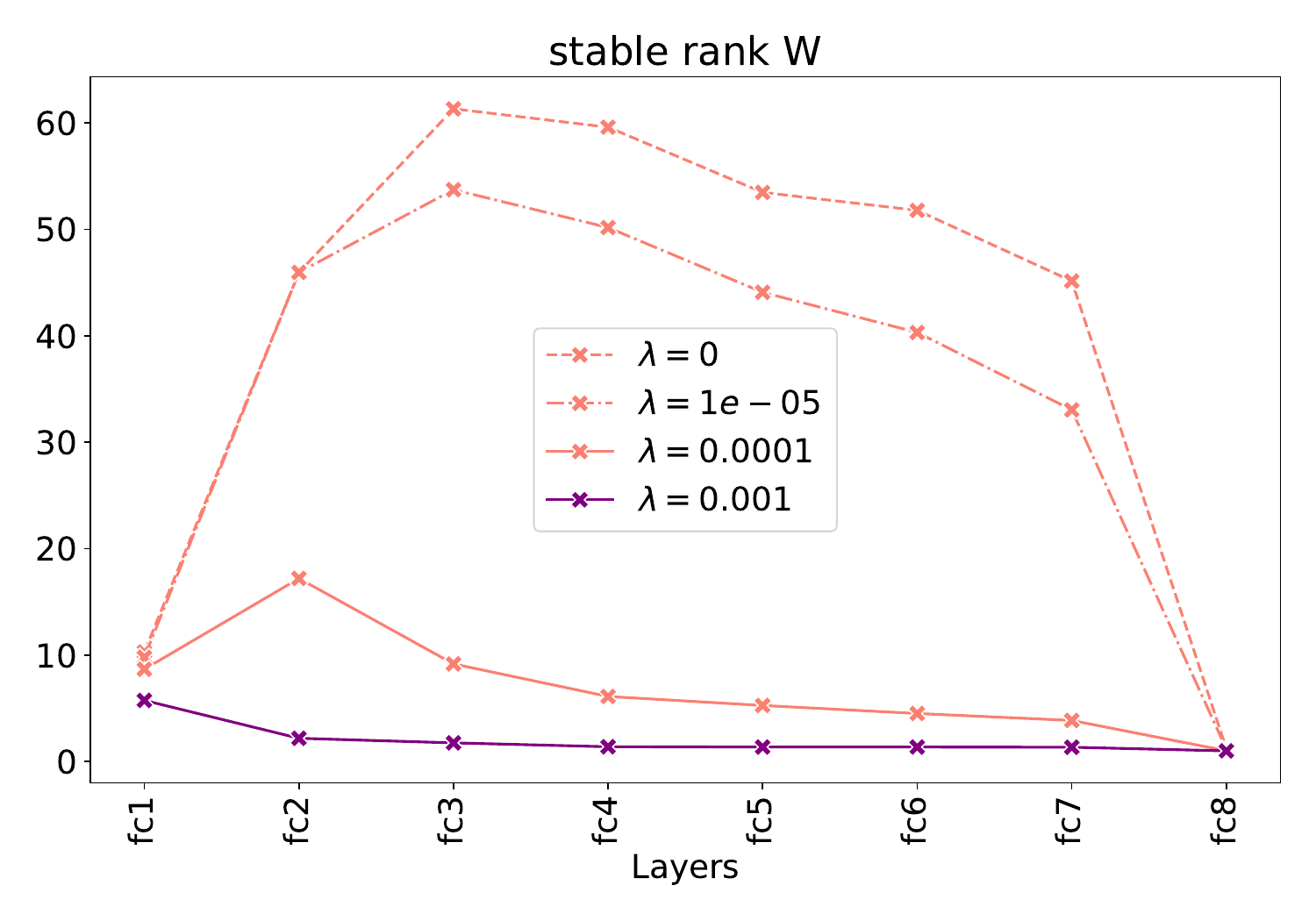}
     \end{subfigure}
        
        \caption{\textbf{Weight decay experiment results on SGEMM dataset:} We train MLPs on SGEMM dataset with varying values of weight decay, and observe the effects of deep NRC. From left to right, we plot the NRC1, NRC3 metrics, and the stable rank of the weights at each layer respectively. In each plot, the measurements with the right weight decay value ($\lambda=1e-3$) are shown in purple, while the remaining weight decay values ($\lambda=[1e-4, 1e-5, 0]$) are shown in salmon. From the top left figure, we observe that the noise component is progressing to 0 for the models trained with lower amounts of weight decay or without weight decay. However, we observe that the model trained with the right weight decay induces NRC1 in earlier layers compared to the other models. Additionally, we also observe that NRC3 is only induced with the right weight decay value while the models with lower weight decay cannot achieve feature-weight alignment. This shows that weight decay is a necessity for NRC3, and this implies a low-rank bias in the weights of the layers, which can also be observed with the bottom plot.
        }
        \label{fig:wd_comparison_sgemm}
\end{figure*}

\section{Validity of NRC Conditions} \label{app:sec:nrc_relationship}

A layer in a deep network is said to be collapsed if it exclusively contains information about the target $\bm{Y}$. We can prove that the conditions defined in section \ref{sec:deepnrc} uniquely capture this notion of collapse through the following analysis which shows that NRC1 + NRC2 imply NRC4. 

\begin{proposition} \label{prop:deepnrc}
Let $\bm{H} \in \Real^{N \times h}$ be the centered feature matrix from any layer in a deep regression model and $\bm{Y} \in \Real^{N \times t}$ be the centered targets. Let $\mathcal{P}_{\bm{Y}}$ be the projection onto the target subspace, $\bm{H}_{sig} = \mathcal{P}_{\bm{U}}$ denote the projection of features into its top $t$-dimensional subspace and $\bm{H}_{noise}$ denote the residual. This means $\bm{H} = \bm{H}_{sig} + \bm{H}_{noise}$. Let the layer be collapsed and satisfy NRC1, NRC2 upto $\epsilon_1$ and $\epsilon_2$ respectively, and the energy of $\bm{H}_{sig}$ be spread out evenly among its singular vectors..

NRC1(Noise Suppression):$1 - \frac{\textrm{Tr}(\bm{H}_{sig}^\top \bm{H}_{sig})}{\textrm{Tr}(\bm{H}^\top \bm{H})} \leq \epsilon_1$ This means that $\frac{\| \bm{H}_{noise} \|_F}{\| \bm{H} \|_F} \le \sqrt{\epsilon_1}$

NRC2(Signal Alignment):$1 - \textrm{CKA}(\bm{H}, \bm{Y}) \le \epsilon_2$

Then we can show that the MSE of linearly predicting $\bm{Y}$ from $\bm{H}$ is bounded as:
\[\frac{\| \bm{H} - \mathcal{P}_{\bm{Y}}\bm{H} \|_F}{\| \bm{H} \|_F} \le \sqrt{\epsilon_1} + \sqrt{ \epsilon_2} \]

\end{proposition}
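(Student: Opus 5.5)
Split $\bm{H} = \bm{H}_{sig} + \bm{H}_{noise}$, apply $(\bfI - \mathcal{P}_{\bm{Y}})$ and the triangle inequality:
\[
\| \bm{H} - \mathcal{P}_{\bm{Y}}\bm{H} \|_F \le \| (\bfI - \mathcal{P}_{\bm{Y}})\bm{H}_{noise} \|_F + \| (\bfI - \mathcal{P}_{\bm{Y}})\bm{H}_{sig} \|_F .
\]
Here $\mathcal{P}_{\bm{Y}} = \bm{Y}\bm{Y}^\dagger$ is the orthogonal projector onto the column space of $\bm{Y}$ in $\Real^N$. The quantity $\| \bm{H} - \mathcal{P}_{\bm{Y}}\bm{H}\|_F$ is the residual of the best linear fit of $\bm{H}$ from $\bm{Y}$. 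For the linear predictability interpretation I will argue at the end that, in the collapsed regime, it also controls the residual of $\bm{Y}$ regressed on $\bm{H}$. The proof then bounds the two terms separately, each after dividing by $\|\bm{H}\|_F$.

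\textbf{Noise term.} Since $\bfI - \mathcal{P}_{\bm{Y}}$ is an orthogonal projector, it is a contraction in Frobenius norm. Hence $\| (\bfI - \mathcal{P}_{\bm{Y}})\bm{H}_{noise} \|_F \le \|\bm{H}_{noise}\|_F$. The NRC1 hypothesis, rewritten as stated via $\|\bm{H}\|_F^2 = \|\bm{H}_{sig}\|_F^2 + \|\bm{H}_{noise}\|_F^2$ (the two pieces live in orthogonal right subspaces), gives $\|\bm{H}_{noise}\|_F \le \sqrt{\epsilon_1}\|\bm{H}\|_F$. This step is routine.

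\textbf{Signal term.} This is the main obstacle: I must convert the CKA bound into a subspace-alignment bound. Write $\bm{H}_{sig} = \bm{A}\bm{S}\bm{V}^\top$ with $\bm{A} \in \Real^{N\times t}$ orthonormal. The even-energy hypothesis says $\bm{S} \approx s\,\bfI_t$, so $\bm{H}_{sig}\bm{H}_{sig}^\top \approx s^2 \bm{A}\bm{A}^\top$. Then $\| (\bfI - \mathcal{P}_{\bm{Y}})\bm{H}_{sig}\|_F^2 = s^2 (t - \Tr(\bm{A}\bm{A}^\top \mathcal{P}_{\bm{Y}}))$, and dividing by $\|\bm{H}_{sig}\|_F^2 = t s^2$ gives
\[
1 - \tfrac{1}{t}\Tr(\bm{A}\bm{A}^\top \mathcal{P}_{\bm{Y}}),
\]
which is one minus the mean squared cosine of the principal angles between $\mathrm{col}(\bm{H}_{sig})$ and $\mathrm{col}(\bm{Y})$.

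I will relate this to CKA, $\langle \bm{K}_H, \bm{K}_Y\rangle_F / (\|\bm{K}_H\|_F\|\bm{K}_Y\|_F)$ with $\bm{K}_H = \bm{H}\bm{H}^\top$. This requires treating $\bm{Y}$ as having comparably spread energy, or better, applying CKA to the signal part as the paper does for NRC2. Then $\bm{K}_H \propto \bm{A}\bm{A}^\top$, and CKA reduces to $\Tr(\bm{A}\bm{A}^\top \bm{K}_Y)/(\sqrt{t}\,\|\bm{K}_Y\|_F)$. In the whitened-target case this is exactly $\tfrac1t\Tr(\bm{A}\bm{A}^\top\mathcal{P}_{\bm{Y}})$. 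So $1-\mathrm{CKA}\le\epsilon_2$ gives $\| (\bfI - \mathcal{P}_{\bm{Y}})\bm{H}_{sig}\|_F \le \sqrt{\epsilon_2}\,\|\bm{H}_{sig}\|_F \le \sqrt{\epsilon_2}\,\|\bm{H}\|_F$.

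I expect the genuinely delicate part to be justifying that the CKA of $\bm{H}$ itself, and not just of $\bm{H}_{sig}$ with whitened $\bm{Y}$, dominates this normalized trace. First I would use the isotropy assumption on both sides. Otherwise I would absorb the discrepancy using NRC1, since $\bm{K}_H - \bm{K}_{H_{sig}} = \bm{H}_{noise}\bm{H}_{noise}^\top$ is $O(\epsilon_1)$ in relative norm. If needed, I would state the result under that interpretation of NRC2.

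\textbf{Conclusion and symmetric version.} Summing the two bounds gives $\sqrt{\epsilon_1} + \sqrt{\epsilon_2}$. To close with linear predictability, I would note that when $\mathrm{rank}(\bm{Y}) = t$ and $\bm{H}$ is nearly confined to $\mathrm{col}(\bm{Y})$ with isotropic signal, the column spaces nearly coincide. The same principal-angle quantity then controls $\|(\bfI - \bm{H}\bm{H}^\dagger)\bm{Y}\|_F / \|\bm{Y}\|_F$, which is the NRC4 MSE.
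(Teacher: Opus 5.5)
Your proposal is correct and follows the paper's proof essentially step for step: the same triangle-inequality split, the same non-expansiveness-plus-NRC1 bound on the noise term, and the same use of the even-energy assumption to reduce the signal residual to $\frac{1}{t}\sum_i \sin^2\theta_i$ and hence to $1-\mathrm{CKA}$. The only difference is that you are more careful at the CKA step, where the paper simply asserts $\mathrm{CKA}(\bm{H},\bm{Y}) = \frac{1}{t}\sum_i \cos^2\theta_i$; as you note, this holds only for whitened targets with CKA taken on the signal part, and your caveat is genuinely needed (for anisotropic $\bm{Y}$ the $\sqrt{\epsilon_2}$ signal bound can fail, and absorbing the noise via NRC1 only gives $\sqrt{\epsilon_2 + \epsilon_1/(1-\epsilon_1)}$), so stating the result under that reading of NRC2 is exactly what the paper does implicitly.
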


\begin{proof}

By the triangle inequality, the projection error onto the target subspace $\mathcal{P}_{\bm{Y}}$ is:

\[ \| (\bm{I} - \mathcal{P}_{\bm{Y}}) \bm{H} \|_F \le \| (\bm{I} - \mathcal{P}_{\bm{Y}}) \bm{H}_{noise} \|_F + \| (\bm{I} - \mathcal{P}_{\bm{Y}}) \bm{H}_{sig} \|_F \]

\textbf{Bounding the Noise Term:} Since the projection operator $(\bm{I} - \mathcal{P}_{\bm{Y}})$ is non-expansive (spectral norm $\le 1$), the error contributed by noise is strictly bounded by the noise magnitude defined in NRC1: $\| (\bm{I} - \mathcal{P}_{\bm{Y}}) \bm{H}_{noise} \|_F \le \| \bm{H}_{noise} \|_F \le \sqrt{\epsilon_1} \| \bm{H} \|_F$ 

\textbf{Bounding the Signal Term:} The signal component $\bm{H}_{sig}$ lies in the subspace spanned by $\bm{U}$. The error is determined by the principal angles $\theta_i$ between $\mathcal{U}$ and $\mathcal{P}_{\bm{Y}}$. Linear CKA measures the cosine similarity of these angles: $\text{CKA}(\bm{H}, \bm{Y}) =  \frac{1}{t}\sum \cos^2 \theta_i$, where $\theta_i$ are the principal angles between the subspaces $\mathcal{P}_{\bm{Y}}$ and $\mathcal{P}_{\bm{U}}$. This implies the signal term is $\frac{1}{t}\sum \sin^2 \theta_i \leq \epsilon_2$. The ratio of total squared error to total signal energy is: $$\frac{\| (\bm{I} - \mathcal{P}_{\bm{Y}}) \bm{H}_{sig} \|_F}{\| \bm{H}_{sig} \|_F} = \sqrt{\frac{\sum \sigma_i^2 \sin^2 \theta_i}{\sum \sigma_i^2}} \leq \sqrt{\epsilon_2}$$ Putting the two terms together yields the desired bound.

\end{proof}

Moreover, if a layer is not collapsed - in the sense that it contains information that does not relate to the target $\bm{Y}$, it should violate either NRC1 (noise component $\not\rightarrow 0$) or NRC2 (CKA $\not\rightarrow 1$). Without both these conditions we cannot guarantee linear predictivity (NRC4). The NRC3 condition extends this description to the weights of the collapsed layers as well.

\end{document}